\def\eqref#1{equation~\ref{#1}}
\def\Eqref#1{Eq.~(\ref{#1})}
\def\1{\bm{1}}
\def\rvepsilon{{\mathbf{\epsilon}}}
\def\va{{\bm{a}}}
\def\vq{{\bm{q}}}
\def\vs{{\bm{s}}}
\def\vx{{\bm{x}}}
\def\vy{{\bm{y}}}
\def\mI{{\bm{I}}}
\DeclareMathAlphabet{\mathsfit}{\encodingdefault}{\sfdefault}{m}{sl}
\SetMathAlphabet{\mathsfit}{bold}{\encodingdefault}{\sfdefault}{bx}{n}
\def\gA{{\mathcal{A}}}
\def\gB{{\mathcal{B}}}
\def\gD{{\mathcal{D}}}
\def\gL{{\mathcal{L}}}
\def\gN{{\mathcal{N}}}
\def\gP{{\mathcal{P}}}
\def\gS{{\mathcal{S}}}
\def\gT{{\mathcal{T}}}
\def\gU{{\mathcal{U}}}
\def\gV{{\mathcal{V}}}
\def\sR{{\mathbb{R}}}
\def\gA{{\mathcal{A}}}
\def\gB{{\mathcal{B}}}
\def\gD{{\mathcal{D}}}
\def\gL{{\mathcal{L}}}
\def\gN{{\mathcal{N}}}
\def\gP{{\mathcal{P}}}
\def\gS{{\mathcal{S}}}
\def\gT{{\mathcal{T}}}
\def\gU{{\mathcal{U}}}
\def\gV{{\mathcal{V}}}
\newcommand{\E}{\mathbb{E}}
\newcommand{\KL}{D_{\mathrm{KL}}}
\newcommand{\policy}{\pi(\va\vert\vs)}
\newcommand{\cbr}[1]{\left\{#1\right\}}
\newcommand{\sbr}[1]{\left[#1\right]}
\newcommand{\pbr}[1]{\left(#1\right)}
\DeclareMathOperator*{\argmax}{arg\,max}
\DeclareMathOperator*{\argmin}{arg\,min}
\theoremstyle{plain}
\newtheorem{theorem}{Theorem}[section]
\newtheorem{proposition}[theorem]{Proposition}
\newtheorem{lemma}[theorem]{Lemma}
\theoremstyle{definition}
\newtheorem{definition}[theorem]{Definition}
\newtheorem{assumption}[theorem]{Assumption}
\theoremstyle{remark}
\newtheorem{remark}[theorem]{Remark}
\title{AlignIQL: Policy Alignment in Implicit Q-Learning through Constrained Optimization}
\author{%
  % Longxiang He \thanks{Use footnote for providing further information
  %   about author (webpage, alternative address)---\emph{not} for acknowledging
  %   funding agencies.} \\
    Longxiang He \\
  Center for Artificial Intelligence and Robotics
\\
  Tsinghua University\\
  % Shenzhen, China \\
  \texttt{hlx22@mails.tsinghua.edu.cn} \\
  % examples of more authors
  \And
  Li Shen \\
  Sun Yat-Sen University \\
  shenli6@mail.sysu.edu.cn \\
  \texttt{shenli6@mail.sysu.edu.cn} \\
  \And
  Xueqian Wang \\
  Center for Artificial Intelligence and Robotics\\
  Tsinghua University \\
  \texttt{wang.xq@sz.tsinghua.edu.cn} \\
  % \And
  % Coauthor \\
  % Affiliation \\
  % Address \\
  % \texttt{email} \\
}
\begin{document}

\maketitle

\begin{abstract}
Implicit Q-learning (IQL) serves as a strong baseline for offline RL, which never needs to evaluate actions outside of the dataset through quantile regression. However, it is unclear how to recover the implicit policy from the learned implicit Q-function, and weighted regression is theoretically justified as a policy extraction method in IQL. In this work, we reformulate the Implicit Policy Finding problem as an optimization problem. Based on this optimization problem, we provide insights into why IQL can use weighted regression for policy extraction and get a practical algorithm, AlignIQL, to solve this optimization problem, which inherits the advantages of decoupling actor from critic in IQL. Compared to IQL, our method retains its simplicity while addressing the implicit policy-finding problem. Experimental results demonstrate that enforcing policy alignment (AlignIQL) improves performance on challenging tasks such as AntMaze and enhances the robustness of the extracted policy.
\end{abstract}

\section{Introduction}
Offline Reinforcement Learning (RL), or Batch RL aims to seek an optimal policy without environmental interactions~\citep{fujimoto2019,levine2020}. This is compelling for having the potential to transform large-scale datasets into powerful decision-making tools and avoid costly and risky online environmental interactions, which offers significant application prospects in fields such as healthcare~\citep{nie2021learning,tseng2017deep} and autopilot~\citep{yurtsever2020survey,rhinehart2018deep}. Notwithstanding its promise, applying off-policy RL algorithms~\citep{lillicrap2015,fujimoto2018,haarnoja2018,haarnoja2018a} directly into the offline context presents challenges due to out-of-distribution actions that arise when evaluating the learned policy.\citep{fujimoto2019,levine2020}. 

Although a variety of methods based on constrained and conservative Q-learning have been proposed to address this problem, IQL~\citep{kostrikov2021} stands out among them since IQL avoids visiting out-of-distribution (OOD) actions and decouples the critic from the actor, which contributes to stability and hyperparameter robustness. For implicit policy extraction, IQL extracts policy through advantage-weighted regression (AWR)~\citep{nair2020,peng2019,peters2010}. However, the general form of extracted policy is $\pi(\va|\vs)\propto\mu(\va|\vs)w(\vs,\va)$, where $\mu(\va|\vs)$ is the behavior policy. The AWR's weight used by IQL is obtained from the constrained policy search, which does not guarantee that it is the policy the learned IQL’s value function is actually evaluating~\citep{hansen-estruch2023}.

To solve this problem, IDQL~\citep{hansen-estruch2023} reinterprets IQL as an actor-critic method and derives the implicit optimal policy weights. Nevertheless, this optimal weight hinges on the assumption that the optimal value function can be learned under certain critic loss functions. It remains unclear whether using AWR to extract policies for IQL is feasible, and how to extract policies from arbitrary critic loss function, not just the expectile loss. Recently, this issue has become more important since 1) many recent offline RL~\citep{chen_score_2023} and safe RL methods~\citep{zheng2023feasibility} use IQL to learn the Q-function; 2) IQL's performance is significantly affected by the choice of a policy extraction algorithm~\citep{tarasov2024role}. Addressing these issues can lead to a better understanding of IQL-style methods' bottlenecks, thereby promoting the development of offline RL.  

In this paper, we address the above issues by formulating the implicit policy-finding problem as an optimization problem, where the objective function is a generalized form of behavior regularizers and the constraint is policy alignment. Policy alignment ensures the extracted policy is the policy implied in the Q-function. By solving this optimization problem, we can get a closed-form solution, which can be expressed by imposing weight on the behavior policy. The weight consists of a value function, an action-value function, and multipliers, indicating that using AWR to extract IQL policies is feasible only when the certain multiplier is less than $0$, and this conclusion can be generalized to any value function loss. Furthermore, our work also explains how the implicit policy in IQL-style methods addresses OOD actions from the perspective of behavior regularizers.

Based on the optimization problem, we present two algorithms, AlignIQL-hard and AlignIQL. Both inherit the characteristics of IQL, $i.e.$ the decoupling of actor and critic training. AlignIQL-hard can theoretically achieve a globally optimal solution, but it is more vulnerable to hyperparameter choices than AlignIQL. AlignIQL relaxes the policy alignment constraint and performs better in complex tasks like sparse rewards tasks, but it does not guarantee convergence to the global optimum.

Recently, Diffusion models~\citep{sohl-dickstein2015,ho2020,song2019} have been widely used in Offline RL, since behavior policy is often complex and potentially multimodal, the unimodal Gaussian policy used in IQL is unlikely to accurately approximate the complex behavior policy~\citep{wang2022b,hansen-estruch2023,chen2022,he2024diffcps}, which in turn affects the implicit policy extraction. Our method can also be easily combined with diffusion models. We just need to resample the actions generated by the diffusion-parameterized behavior model according to the weights $w(\vs,\va)$ of our method. 
% We, therefore, use the diffusion model~\citep{sohl-dickstein2015,ho2020,song2019} to model the behavior policy.
% Noting that recent works have shown that diffusion models can address the issue of limited expressivity problem in offline RL \ls{rewrite this sentence.}
We evaluate the effectiveness of our method on D4RL datasets, image-based control tasks, and robustness benchmarks. Experimental results show that implementing policy alignment (AlignIQL) leads to improved performance on challenging tasks such as the AntMaze tasks and enhances the robustness of the extracted policy.
 
To summarize, our main contributions are as follows:
\begin{itemize}
    \item We propose the policy-finding problem, where the policy alignment term is added as a constraint. By solving this problem, we provide insights into why and when IQL can use weighted regression for policy extraction, and in turn, make it better to understand the bottlenecks of the IQL-style algorithms.
    \item We demonstrate that there is no price to achieving policy alignment in IQL-style methods, all we need is to modify the importance weights of the extracted policy. These results can be generalized to any generalized value loss function, which greatly extends the theoretical results of IDQL. 
    \item We propose AlignIQL, a novel IQL policy extraction method that achieves policy alignment. Extensive experiments demonstrate that enforcing policy alignment enhances the robustness of the extracted policy and improves performance on challenging tasks.

\end{itemize}

\section{Related Works}
 \textbf{Offline RL}. Offline RL algorithms need to avoid OOD actions. Previous methods to mitigate this issue under the model-free offline RL setting generally fall into three categories: 1) value function-based approaches, which implement pessimistic value estimation by assigning low values to out-of-distribution actions~\citep{kumar2020,fujimoto2019}, or implicit TD backups~\citep{kostrikov2021,ma2021offline} to avoid the use of out-of-distribution actions 2) sequential modeling approaches, which casts offline RL as a sequence generation task with return guidance~\citep{chen2021,janner2022,liang2023,ajay2022}, and 3) constrained policy search (CPS) approaches, which regularizes the discrepancy between the learned policy and behavior policy~\citep{peters2010,peng2019,nair2020}. 
 
 \textbf{Implicit Q-learning}. Implicit Q-learning~\citep{kostrikov2021} has attracted interest due to its stable training and simplicity. Many offline RL methods~\citep{chen_score_2023,zheng2023feasibility,hansen-estruch2023}  use IQL-style expectile regression to learn Q-function and realize the advantage of decoupling the training of actor and critic. While IQL achieves superior performance, several issues remain unsolved. SQL~\citep{xuOfflineRLNo2023} reinterprets IQL in the Implicit Value Regularization (IVR) framework and provides insights about why in practice a large  $\tau$ may give a worse result in IQL. However, there is another important open question about IQL, that is, what policy the learned value function is evaluating. IDQL~\citep{hansen-estruch2023} solves this by reinterpreting the IQL as an actor-critic method and getting the corresponding implicit policy for the (generalized) IQL loss function. However, the corresponding implicit policy in IDQL only holds for optimal value function under certain critic loss functions.  

The closest work to ours is IDQL~\citep{hansen-estruch2023}, which derives the implicit policy for optimal value function under different critic loss functions. Our method is related, but features with AlignIQL can be applied to arbitrary sub-optimal value functions and arbitrary critic loss functions. More importantly, our method explains when and why IQL can use AWR for policy extraction while providing theoretical insights for IQL and other RL paradigms that use Q-values to guide sampling. 

% \ls{add an additional paragraph to highlight the difference between this work and the most related works.}
% \ls{The most related work to ours is XXX []. The main difference lies in XXX-fold: (i) (ii)(iii)..}

\section{Background}
\textbf{Offline RL.} Consider a Markov decision process (MDP): $M = \{\gS, \gA, P, R, \gamma, d_0\}$, with state space $S$, action space $\gA$, environment dynamics $\gP(\vs' \vert \vs, \va): \gS \times  \gA \times \gS \rightarrow [0,1]$, reward function $R: S \times \gA \rightarrow \sR$, discount factor $\gamma \in [0, 1)$, policy  $\pi(\va|\vs):\gS\times\gA\rightarrow [0,1]$, and initial state distribution $d_0$. The action-value or Q-value of policy $\pi$ is defined as $Q^\pi(\vs_t, \va_t) = \E_{\va_{t+1}, \va_{t+2}, ... \sim \pi}\sbr{\sum_{j=0}^{\infty}\gamma^j r(\vs_{t+j}, \va_{t+j})}$. The value function of policy $\pi$ is defined as $V^\pi(\vs)=\int_\mathcal{A}Q^\pi(\vs, \va)\pi(\va|\vs) d\va$. The goal of RL  is  to get a policy to maximize  the cumulative discounted reward 
$J(\pi) = \int_\mathcal{S} d_0(\vs) V^\pi(\vs) d\vs$.  $d^\pi(\vs)=\sum_{t=0}^\infty\gamma^t p_\pi(\vs_t =\vs)$ is the state visitation distribution induced by policy $\pi$~\citep{sutton2018,peng2019}, and $p_\pi(\vs_t =\vs)$ is the likelihood of the policy being in state $\vs$ after following $\pi$ for $t$ timesteps. In offline setting~\citep{fujimoto2019}, environmental interaction is not allowed, and a static dataset $\gD \triangleq \cbr{(\gS, \gA, R, \gS', \text{done})}$ is used to learn a policy.

\textbf{Advantage Weighted Regression (AWR)}. Prior works~\citep{peters2010,peng2019} formulate offline RL as a constrained policy search (CPS) problem with the following form:
\begin{align}\label{origin_cps}
        \pi^{*} = \argmax_\pi \ &J(\pi) = \argmax_\pi \int_\mathcal{S} d_0(\vs) \int_\mathcal{A}\pi(\va|\vs)Q^\pi(\vs, \va) d\va d\vs  \nonumber\\
         s.t.  \quad & \KL(\mu(\cdot\vert\vs)\|\pi(\cdot\vert\vs)) \leq \rvepsilon,\quad \forall \vs\\  
         & \int_\va \pi(\va\vert\vs)d\va =1,\quad \forall \vs, \nonumber
\end{align}
Previous works~\citep{peters2010,peng2019,nair2020} solve \Eqref{origin_cps} through KKT conditions and get the optimal policy $\pi^*$ as:
\begin{align}
    \pi^*(\va\vert\vs) &= \frac{1}{Z(\vs)} \ \mu(\va\vert\vs) \ \mathrm{exp}\left(\alpha Q_\theta(\vs, \va) \right),
\label{Eq:pi_optimal}
\end{align}
where $Z(\vs)$ is the partition function, $\alpha\geq0$ is a Lagrange multiplier, and $Q_\theta$ is a learned Q-function of the current policy $\pi$. Intuitively we can use \Eqref{Eq:pi_optimal} to optimize policy $\pi$. However, the behavior policy may be very diverse and hard to model. To avoid modeling the behavior policy, prior works~\citep{peng2019,wang2020,chen2020} optimize $\pi^{*}$ through a parameterized policy $\pi_\phi$, known as AWR:
\begin{align}
    & \mathop{\mathrm{arg \ min}}_{\phi} \mathbb{E}_{\vs \sim \mathcal{D}^\mu} \left[ \KL \left(\pi^*(\cdot  | \vs) \middle|\middle| \pi_\phi(\cdot  | \vs)\right) \right]  \label{Eq:wr} \\
    = & \mathop{\mathrm{arg \ max}}_{\phi} \mathbb{E}_{(\vs, \va) \sim \mathcal{D}^\mu} \left[ \frac{1}{Z(\vs)} \mathrm{log} \ \pi_\phi(\va | \vs) \ \mathrm{exp}\left(\alpha Q_\theta(\vs, \va) \right) \right].\nonumber
\end{align}
where  $\mathrm{exp}(\alpha Q_\theta(\vs, \va))$ being the regression weights. 
% However, AWR requires the exact probability density of policy, which restricts the use of generative models like diffusion models. In this paper, we directly utilize the diffusion-based policy to address Eq.~(\ref{origin_cps}). Therefore, our method not only avoids the need for explicit probability densities in AWR but also solves the limited policy expressivity problem.

\textbf{Implicit Q-learning (IQL).} To avoid OOD actions in offline RL, IQL~\citep{kostrikov2021} uses the state conditional upper expectile of action-value function $Q(\vs,\va)$ to estimate the value function $V(\vs)$, which avoid directly querying a Q-function with unseen action.  For a parameterized critic $Q_\theta(\vs,\va)$, target critic $Q_{\hat \theta}(\vs,\va)$, and value network $V_{\psi}(\vs)$ the value objective is learned by 
\begin{equation}
    \begin{aligned}    
        \label{eqn:fit_v_expectiles}
        &\gL_V(\psi) = \E_{(\vs,\va)~\sim \gD}[L_2^\tau(Q_{\hat{\theta}}(\vs,\va) - V_\psi(\vs))] \\ 
        &\text{where} \quad L_2^\tau(u) = |\tau-\mathbbm{1}(u < 0)|u^2,
\end{aligned}
\end{equation}
where $\mathbbm{1}$ is the indicator function. 
Then, the Q-function is learned by minimizing the MSE loss
\begin{align}
    \label{eqn:fit_q}
    \mathcal{L}_Q(\theta) &= \mathbb{E}_{(\vs,\va,\vs')~\sim \mathcal{D}}[(r(\vs,\va) + \gamma V_\psi(\vs') - Q_{\theta}(\vs,\va))^2].
\end{align}
Note that, in IQL, the policy is not explicitly represented, it is implicit in the learned value function. For policy extraction, IQL uses \Eqref{Eq:wr} in AWR \citep{peters2010,peng2019,nair2020}, which trains the policy through weighted regression by minimizing $\mathcal{L}_\pi(\phi)$
 \begin{align}
 \label{AWR}
  \mathbb{E}_{(\vs,\va)\sim \mathcal{D}}[-\exp(\alpha(Q_{\hat \theta}(\vs,\va) - V_{\psi}(\vs)))\log \pi_{\phi}(\va|\vs)].
\end{align}

However, it is still unclear whether AWR can be used to extract policies for IQL.
 Answering this question can help us better understand the bottlenecks of IQL-style methods.

% \ls{the importance of the feasibility should be clearly stated.}
\section{Implicit Policy-finding Problem}

% \ls{it is not clear "what is the implicit policy finding problem" and "how to model this problem and why?"}
Before presenting our method, we formally introduce the definition of policy alignment and the formulation of the Implicit Policy-Finding Problem. We begin with Definition~\ref{def1}, which characterizes the policy implied by the value function. Policy alignment is considered achieved if the learned value function and the extracted policy satisfy the conditions in Definition~\ref{def1}.

\begin{definition}
\label{def1}
    \textbf{Policy Alignment:} We refer to a policy as one implied by the value function $Q(\vs,\va), V(\vs)$, when 
    \begin{equation}
    \label{ignored}
    Q(\vs,\va)-r(\vs,\va)-\gamma\E_{\vs'\sim p(\vs' | \vs,\va),\va'\sim\pi(\va'|\vs')}\sbr{Q(\vs',\va')}=0.
\end{equation}
\begin{equation}
\label{align_cs}
    \E_{\va\sim\pi(\va|\vs)}\sbr{Q(\vs,\va)}=V(\vs), 
\end{equation}
\end{definition}
Definition~\ref{def1} is derived from IDQL~\citep{hansen-estruch2023} and the conventional definition of the value function in actor-critic methods. 
% Definition~\ref{def1} shows if we can constrain the policy to satisfy Definition~\ref{def1}, then we can ensure policy alignment. 
Note that in IQL, the $Q$-function is updated by minimizing~\Eqref{eqn:fit_q}. This implies that if \Eqref{align_cs} holds, \Eqref{ignored} can be derived by substituting \Eqref{align_cs} into \Eqref{eqn:fit_q} and then setting the gradient with respect to $Q_\theta$ to zero. So in the following sections, we eliminate \Eqref{ignored} and use \Eqref{align_cs} as the policy alignment constraint. 

It is known that the offline RL problem can be solved by the constrained policy search (CPS) problem (aka AWR)~\citep{nair2020,peng2019,peters2010}, where a policy is sought to maximize cumulative rewards under the constraint of policy divergence from the behavior policy. Inspired by CPS, we formulate the \textit{implicit policy-finding problem} (IPF) as a constrained optimization problem, where a policy is sought to minimize  policy divergence from the behavior policy under policy alignment
\begin{equation}
\label{align_p}
\tag{IPF}
    \begin{aligned}
         \min_{\pi} \quad  &\E_{\vs\sim d^\pi(\vs),\va\sim\policy}\sbr{f\pbr{\frac{\policy}{\mu(\va\vert\vs)}}}\\
        & s.t. \quad \policy\geq 0, \quad \forall \vs,\forall \va \\
        & \quad \int_\va\policy d\va = 1, \quad \forall \vs \\ 
        & \quad \E_{\va\sim\pi(\va|\vs)}\sbr{Q(\vs,\va)}-V(\vs)=0,\quad \forall \vs,\\
        % & \quad Q(\vs,\va)-r(\vs,\va)-\gamma\E_{\vs'\sim p(\vs' | \vs,\va),\va'\sim\policy}\sbr{Q(\vs',\va')}=0\\
    \end{aligned}
\end{equation}
where $V(\vs), Q(\vs,\va)$ is the learned value function, which does not have to be the optimal value function. $f(\cdot)$ is a regularization function which aims to avoid out-of-distribution actions. The third constraint ensures that the extracted policy is the policy implied in $Q, V$. 

 % Although constraint $3$ in problem~\ref{align_p} is derived from IDQL~\citep{hansen-estruch2023}, our method differs from it as Problem~\ref{align_p} applies to any learned $Q$-function rather than just the optimal value function. 
 Here we briefly describe the characteristics of the solution to problem~\ref{align_p}. In problem~\ref{align_p}, when the feasible set includes multiple policies ($i.e.$ 
 multiple implicit policies satisfy Definition~\ref{def1}), problem~\ref{align_p} aims to find an optimal implicit policy that deviates least from the behavior policy while satisfying the requirements of policy alignment. In other cases, when the feasible set has a unique policy, problem~\ref{align_p} will return the unique policy as the optimal implicit policy. The above analysis shows that we can model the implicit policy-finding problem in IQL as problem~\ref{align_p}. 
 % \ls{the goal of this part is not clear.}

\begin{assumption}
    Assume $\pi(\va|\vs)>0\Longrightarrow\mu(\va|\vs)>0$ so that $\frac{\pi(\va|\vs)}{\mu(\va|\vs)}$ is well-defined.~\citep{xuOfflineRLNo2023}
\end{assumption}

\begin{assumption}
\label{as2}
    Assume that $f(x)$ is differentiable on $(0,\infty)$ and that $h_f(x)=xf(x)$ is strictly convex and $f(1)=0$.~\citep{xuOfflineRLNo2023}
\end{assumption}
\begin{remark}
    Under the above assumptions, problem~\ref{align_p} is a convex optimization problem and assumption~\ref{as2} makes the regularization term positive due to Jensen's inequality as $\E_\mu[\frac{\pi}{\mu}f(\frac{\pi}{\mu})]\geq1, f(1)=0$~\citep{xuOfflineRLNo2023}. Slater's conditions hold since the first and second constraints define a probability simplex, and the third constraint defines a hyperplane in the tabular setting. The intersection of these convex sets is nonempty if the optimal policy exists, $i.e.$ the optimal policy is not a uniform distribution. The analysis described above shows that this convex optimization problem is feasible and Slater's conditions are satisfied. 
\end{remark}

% In section~\ref{experiment}, we will show this alignment will be helpful in the Offline-to-online (O2O) fine-tuning and sparse rewards settings.

\section{Optimization}
% In this section, we first explain when and why AWR can be used for policy extraction in IQL by solving problem~\ref{align_p} and get AlignIQL-hard. In theory, AlignIQL-hard can achieve global optimal but suffers from complex training. To solves this, we relax  problem~\ref{align_p} and get a closed solution from the relaxed problem, $i.e.$ AlignIQL. AlignIQL avoids the training complexity of AlignIQL-hard while also guaranteeing local convergence to the optimal solution of problem~\ref{align_p}  through soft constraints. All proof can be found in Appendix~\ref{all_prof}.
In this section, we first solve Problem~\ref{align_p} to explain when and why AWR can be used for policy extraction in IQL, leading to AlignIQL-hard. Theoretically, AlignIQL-hard can achieve global optimality but faces a complex training process. To address this issue, we relax Problem~\ref{align_p} and derive a closed-form solution from the relaxed problem, namely AlignIQL. AlignIQL avoids the training complexity of AlignIQL-hard while ensuring that the optimal solution of Problem~\ref{align_p} is also a local optimum of AlignIQL. All proofs can be found in Appendix~\ref{all_prof}.
% \ls{not clear why we introduce two types of methods to solve this problem.}

\subsection{Hard Constraint Solving}
\label{aligniql-hard}
We first consider directly solving \ref{align_p} with KKT conditions (See proof in Appendix~\ref{theorem1}) and get the following theorems. 
\begin{theorem}
\label{optimal_policy}
    For  problem~\ref{align_p}, the optimal policy $\pi^*$ and its optimal Lagrange multipliers satisfy the following optimality condition for all states and actions:
    \begin{equation}
    \label{pi_star}
    \pi^\star(\va\vert\vs) = \mu(\va\vert\vs)\max\cbr{g_f\pbr{-\alpha^*(\vs)-\beta^*(\vs) Q(\vs,\va)},0}.
\end{equation}
\begin{equation}
    \label{loss_1}
    \E_{\va\sim\mu}\sbr{\max\cbr{g_f(-\alpha^*(\vs)-\beta^*(\vs) Q(\vs,\va)),0}}=1,
\end{equation}
\begin{equation}
\begin{aligned}
    &\E_{\va\sim\mu(\va|\vs)}[Q(\vs,\va)\max\cbr{g_f(-\alpha^*(\vs)-\beta^*(\vs) Q(\vs,\va)),0}-V(\vs)]=0,
\end{aligned}
    \label{loss_2}
\end{equation}
where $\alpha^*,\beta^*$ is the Lagrange multiplier, $g_f$ is the inverse function of h$'_f(x)$.
\end{theorem}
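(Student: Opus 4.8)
The plan is to solve the convex program \ref{align_p} through its KKT conditions, which are here both necessary and sufficient: Assumption~\ref{as2} makes $h_f(x)=xf(x)$ strictly convex so the objective is convex, while the Remark already established that Slater's condition holds, giving strong duality. The first move is to recast the objective in terms of $h_f$. Writing the density ratio as $\pi/\mu$ and using the identity $\pi\,f(\pi/\mu)=\mu\,h_f(\pi/\mu)$, the per-state integrand becomes $\mu(\va|\vs)\,h_f\!\left(\pi(\va|\vs)/\mu(\va|\vs)\right)$. Since all three constraints are imposed separately for each $\vs$ and the weight $d^\pi(\vs)>0$ only rescales each state's contribution positively, the problem decouples across states and I can minimize the inner integral state by state.

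Next I would form the Lagrangian at a fixed $\vs$, attaching a multiplier $\alpha(\vs)$ to the normalization constraint $\int_\va\pi\,d\va=1$, a multiplier $\beta(\vs)$ to the alignment constraint $\int_\va\pi Q\,d\va=V$, and a KKT multiplier $\lambda(\vs,\va)\ge 0$ to the pointwise nonnegativity constraint $\pi\ge 0$. Taking the pointwise derivative with respect to $\pi(\va|\vs)$ and using $\partial_\pi\!\left[\mu\,h_f(\pi/\mu)\right]=h'_f(\pi/\mu)$, stationarity yields $h'_f(\pi/\mu)=-\alpha(\vs)-\beta(\vs)Q(\vs,\va)+\lambda(\vs,\va)$.

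The decisive step is to eliminate $\lambda$ via complementary slackness $\lambda(\vs,\va)\,\pi(\va|\vs)=0$. Where $\pi>0$ we have $\lambda=0$, so inverting the strictly increasing $h'_f$ (its inverse $g_f$ is well-defined precisely because $h_f$ is strictly convex) gives $\pi/\mu=g_f(-\alpha-\beta Q)$; where the unconstrained stationary point would be nonpositive, the nonnegativity constraint binds and $\pi=0$. These two cases combine into the single expression $\pi^\star=\mu\,\max\{g_f(-\alpha^*-\beta^*Q),0\}$, which is \eqref{pi_star}. I would then substitute this form back into the two equality constraints: the normalization constraint gives \eqref{loss_1} directly, and the alignment constraint $\int_\va\pi^\star Q\,d\va=V$ gives \eqref{loss_2} after noting that the action-independent $V(\vs)$ can be absorbed inside the expectation over $\mu$.

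I expect the main obstacle to be the rigorous justification of the per-state decoupling, since $d^\pi(\vs)$ genuinely depends on $\pi$; the clean resolution is that the constraints and the achievable per-state minimum are independent across states, so the optimal conditional policy at each $\vs$ is unaffected by the positive weight $d^\pi(\vs)$, exactly as in the AWR derivation leading to \Eqref{Eq:pi_optimal}. The only remaining care is the domain/boundary analysis of $g_f$, needed to confirm that the $\max\{\cdot,0\}$ truncation is the correct active-set behavior of the nonnegativity constraint.
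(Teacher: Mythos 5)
Your proposal is correct and follows essentially the same route as the paper's own proof: form the Lagrangian with multipliers $\alpha(\vs)$, $\beta(\vs)$, $\lambda(\va|\vs)$, take the pointwise stationarity condition $h_f'(\pi/\mu)+\alpha+\beta Q-\lambda=0$, eliminate $\lambda$ by complementary slackness (the two-case analysis yielding the $\max\{\cdot,0\}$ truncation via $g_f(h_f'(0))=0$), and substitute back into the two equality constraints to obtain \Eqref{loss_1} and \Eqref{loss_2}. The per-state decoupling concern you flag is handled in the paper by simply dropping $d^\pi(\vs)$ under an irreducibility assumption, which matches your positive-weight argument.
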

% We can also follow ~\citet{peters2010,peng2019,nair2020} to train our policy $\pi_\phi$ through 
% \begin{equation}
% \label{aligniqlhard-loss}
%     \begin{aligned}
%     & \mathop{\mathrm{arg \ min}}_{\phi} \mathbb{E}_{\vs \sim \mathcal{D}^\mu} \left[ \KL \left(\pi^*(\cdot  | \vs) \middle|\middle| \pi_\phi(\cdot  | \vs)\right) \right]  \\
%     \approx & \mathbb{E}\sbr{-\max\cbr{g_f\pbr{-\alpha^*(\vs)-\beta^*(\vs) Q(\vs,\va)},0}\log \pi_{\phi}(\va|\vs)}.   
% \end{aligned}
% \end{equation}
% However, loss function~\Eqref{aligniqlhard-loss} needs the exact policy density, which may limit the usage of diffusion models or other generative models.

\textbf{Connection to AWR}:  Note that $\alpha^*$ is a normalization term, it does not affect the action generated by the policy. 
    % Since $h_f'$ is a strictly increasing function, its inverse function $g_f$ exists and is also a strictly increasing function.  
    Let $f(x)=\log x$, then $g_f(x)=\exp{(x-1))}>0$, we can get $\pi^*(\va|\vs)\propto \mu(\va|\vs)\exp{(-\beta^* Q(\vs,\va))}$ In most environments (especially MuJoCo tasks), $\beta^*$ we learned through the neural network is negative. We can rewrite $-\beta^*$ with a fixed $\beta\in(0,\infty]$, $i.e.$ $\pi^*(\va|\vs)\propto \mu(\va|\vs)\exp{(\beta Q(\vs,\va))}$, which is exactly what optimal policy obtained by AWR. This explains why IQL can learn implicit policy with weighted regression and shows implicit policy further avoids the OOD actions through the regularization function $f$, which gives a deeper understanding of how IQL-style methods handle the distribution shift. This also addresses the issue in IDQL, as they find that simply selecting the action with the highest Q-value at evaluation time usually leads to better performance since the policy for some tasks is expressed as $\pi^*(\va|\vs)\propto \mu(\va|\vs)\exp{(\beta Q(\vs,\va))}$.

 Previous works~\citep{hansen-estruch2023,chen2022} often use the increasing function of $Q(\vs,\va)$ as a weight. However, according to Theorem~\ref{optimal_policy}, when $\beta^*(\vs)\geq 0 $, we need to be more conservative, that is, we should choose actions with lower $Q(\vs,\va)$. To calculate the weights, we need to solve the closed-form solution of \Eqref{loss_1}, \Eqref{loss_2}, which is usually intractable. However, we can use the parameterized neural network to approximate it.
\begin{lemma}
\label{loss_lemma}
     Following EQL~\citep{xuOfflineRLNo2023}, let $f(x)=\log x$, then $g_f(x)=\exp{(x-1))}>0$. We can approximate $\alpha^*(\vs)$, $\beta^*(\vs) $ through neural network  with the following loss function:
     \begin{equation}
         \label{loss_multiplier}
         \begin{aligned}
                      &\max_{\alpha, \beta} \gL_M=-\E_{\va\sim\mu}\sbr{\exp{\pbr{-\alpha(\vs)-\beta(\vs) Q(\vs,\va)-1}}}-\alpha(\vs)-\beta (\vs) V(\vs),
         \end{aligned}
     \end{equation}
\end{lemma}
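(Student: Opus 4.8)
The plan is to recognize $\mathcal{L}_M$ as the Lagrangian dual function of Problem~\ref{align_p} specialized to $f(x)=\log x$, so that maximizing it over the multipliers $(\alpha,\beta)$ recovers exactly the optimal multipliers $(\alpha^*,\beta^*)$ appearing in Theorem~\ref{optimal_policy}. First I would specialize the general construction to this $f$: here $h_f(x)=x\log x$, hence $h_f'(x)=\log x+1$ and its inverse is $g_f(y)=\exp(y-1)$, which is strictly positive on all of $\R$. Consequently the truncation $\max\{\cdot,0\}$ in \Eqref{pi_star} is always inactive, the nonnegativity constraint $\pi\geq 0$ has a zero (inactive) multiplier, and the optimal policy reduces to $\pi^\star(\va|\vs)=\mu(\va|\vs)\exp(-\alpha^*(\vs)-\beta^*(\vs)Q(\vs,\va)-1)$. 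Because Problem~\ref{align_p} decouples across states, I fix a state $\vs$ and work pointwise, so $\alpha(\vs),\beta(\vs)$ are treated as scalars.

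Next I would form the pointwise Lagrangian, attaching multiplier $\alpha(\vs)$ to the normalization constraint $\int \pi\,d\va=1$ and $\beta(\vs)$ to the alignment constraint $\E_{\va\sim\pi}[Q]=V(\vs)$, and substitute the minimizing policy $\pi^\star$ back in. The inner objective then collapses: since $\log(\pi^\star/\mu)+\alpha+\beta Q=-1$ identically, the Lagrangian telescopes and the dual function equals $-\E_{\va\sim\mu}[\exp(-\alpha-\beta Q-1)]-\alpha-\beta V(\vs)$, which is precisely $\mathcal{L}_M$. As an independent cross-check I would differentiate $\mathcal{L}_M$ directly, obtaining $\partial_\alpha \mathcal{L}_M=\E_{\va\sim\mu}[\exp(-\alpha-\beta Q-1)]-1$ and $\partial_\beta \mathcal{L}_M=\E_{\va\sim\mu}[Q\exp(-\alpha-\beta Q-1)]-V(\vs)$; setting both to zero reproduces exactly the constraint pair \Eqref{loss_1}--\Eqref{loss_2} (with $g_f=\exp(\cdot-1)$ and the max removed). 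Thus the stationarity conditions of $\mathcal{L}_M$ coincide with the optimality conditions of Theorem~\ref{optimal_policy}.

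Finally I would verify that these stationary points are genuine maximizers. The Hessian of $\mathcal{L}_M$ is $-\E_{\va\sim\mu}[\exp(-\alpha-\beta Q-1)\,\vv\vv^\top]$ with $\vv=(1,Q)^\top$, which is negative semidefinite as an expectation of rank-one positive-semidefinite matrices scaled by a strictly positive weight, and strictly negative definite whenever $Q(\vs,\cdot)$ is non-constant under $\mu$. Hence $\mathcal{L}_M$ is concave, so any stationary point is a global maximum; combined with the Slater condition already established in the Remark, strong duality holds with zero gap, and the dual optimum is attained exactly at $(\alpha^*,\beta^*)$. The main obstacle is therefore not the mechanical gradient computation but justifying this primal--dual correspondence cleanly, in particular arguing attainment of the maximum: one must check coercivity of $\mathcal{L}_M$ in $(\alpha,\beta)$, which follows from $Q$ taking a nondegenerate range of values under $\mu$, so that the neural-network parameterization in the lemma is indeed approximating a well-defined and unique target.
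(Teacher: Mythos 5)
Your proposal is correct, and its computational core coincides with the paper's (one-line) proof: the paper simply observes that setting $\nabla_{\alpha,\beta}\gL_M=0$ reproduces \Eqref{loss_1} and \Eqref{loss_2} with $g_f(x)=\exp(x-1)$ and the $\max\cbr{\cdot,0}$ rendered inactive by positivity of $g_f$ — exactly your ``independent cross-check'' step. Where you go beyond the paper is in explaining \emph{why} one should maximize $\gL_M$ at all: you derive $\gL_M$ as the per-state Lagrangian dual function of problem~\ref{align_p} by substituting the minimizing policy $\pi^\star=\mu\exp(-\alpha-\beta Q-1)$ back into the Lagrangian (the telescoping you describe checks out), you verify concavity via the Hessian $-\E_{\va\sim\mu}\sbr{\exp(-\alpha-\beta Q-1)\,\vv\vv^\top}$ so that stationarity implies global optimality of the dual, and you invoke Slater's condition for zero duality gap and attainment. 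The paper's proof buys brevity but leaves the direction of optimization ($\max$ rather than merely ``find a stationary point'') and the uniqueness/attainment of $(\alpha^*,\beta^*)$ unjustified; your version supplies both. One small note: the coercivity argument you flag as the main obstacle is largely subsumed by Slater's condition (already established in the paper's Remark), under which the dual optimal set is nonempty and bounded, so you need not argue attainment separately from the nondegeneracy of $Q$ under $\mu$ — though your strict-concavity condition (that $Q(\vs,\cdot)$ be non-constant under $\mu$) is indeed what guarantees uniqueness of the target the network approximates.
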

\begin{proof}
Then Lemma~\ref{loss_lemma} can be get through setting the  gradient of \Eqref{loss_multiplier} to $0$ with respect to $\alpha,\beta$, which is \Eqref{loss_1}, \Eqref{loss_2}  respectively. 
\end{proof}
\begin{remark}
Now we can obtain $\alpha^*,\beta^*$ by iteratively updating $\alpha,\beta$ following \Eqref{loss_multiplier}. 
\end{remark}

Based on Theorem~\ref{optimal_policy} and Lemma~\ref{loss_lemma}, we can get AlignIQL-hard, where hard means we rigidly constrain the policy to satisfy policy alignment. AlignIQL-hard shows when multiplier $\beta(\vs)<0$, we can use AWR for extracting the implicit policy in IQL. However, for strict policy alignment, AlignIQL-hard needs to train an additional two multiplier networks, which increases the training costs and compound errors. Moreover, the exponential term in \Eqref{loss_multiplier} makes the unstable training. In the remainder of this section, we introduce a simple and effective method AlignIQL to solve problem~\ref{align_p}.

% To solve this problem, we introduce AlignIQL-hard, which approximates Lagrange multiplier $\beta^*(\vs),\alpha^*(\vs)$ with the neural network. Lemma~\ref{loss_lemma} shows how we update it.
% \ls{not clear why we call it AlignIQL-hard. what is the meaning of "hard"? We should make the introduction of this method more clear. }
% \ls{We should make clear comments on the AlignIQL-hard methods, e.g., it requires to training 2 NNs. }
% \ls{the main algorithm is AlignIQL, why do we introduce AlignIQL-hard?} 
% \ls{the total section should be reconsidered.}
% \longx{solved}

\subsection{Soft Constraint Solving}
\label{soft_solve}
In this section, we introduce AlignIQL to solve the alignment problem of IQL. Firstly, we introduce the soft constraint form of problem~\ref{align_p}. Given $\eta>0$, IPF-Soft is defined as 
\begin{equation}
\label{align_soft}
\tag{IPF-Soft}
    \begin{aligned}
         \min_{\pi,V(\vs)} \E_{\substack{\vs\sim d^\pi(\vs)\\ \va\sim\policy}}&\sbr{f\pbr{\frac{\policy}{\mu(\va\vert\vs)}}+\eta\pbr{Q(\vs,\va)-V(\vs)}^2}\\
        &s.t. \quad  \policy\geq 0, \quad \forall \vs,\forall \va \\
        & \quad  \int_\va\policy d\va = 1, \quad \forall \vs. \\ 
    \end{aligned}
\end{equation}
\begin{remark}
 Note that we relax problem~\ref{align_p} by adding penalty term $\mathbb{E}_{\va\sim\pi(\va|\vs)}[\eta\pbr{Q(\vs,\va)-V(\vs)}^2]$ rather than $\eta(\mathbb{E}_{\va\sim\pi(\va|\vs)}[Q(\vs,\va)]-V(\vs))^2$. The latter relaxation formulation is equivalent to the quadratic penalty method, whose convergence relies on the penalty parameter $\eta$ 
 approaching positive infinity which leads to an ill-conditioned Hessian matrix for the quadratic penalty function~\citep{nocedal1999numerical}. Our penalty term can avoid this issue since the optimal solution of $\mathbb{E}_{\va\sim\pi(\va|\vs)}[\eta\pbr{Q(\vs,\va)-V(\vs)}^2]$ satisfies \Eqref{align_cs} (setting the gradient to $0$ with respect to $V$), which shows that our penalty term can implicitly recover policy alignment constraint~\Eqref{align_cs}.

 % Our penalty function is inspired by Theorem 4.1 from IDQL~\citep{hansen-estruch2023}, which can avoid the aforementioned issues. In fact, we can  This is the main reason why AlignIQL relax IPF with $\eta\pbr{Q(\vs,\va)-V(\vs)}^2$.
\end{remark}
    We refer to the above problem as problem~\ref{align_soft}, since the policy alignment is not rigidly held. Then we solve problem~\ref{align_soft} by KKT conditions and get the optimal policy $\pi^\star(\va\vert\vs)$:
\begin{equation}
    \label{pi_s_star_1}
     \mu(\va\vert\vs)\max\cbr{g_f\pbr{-\alpha(\vs)-\eta\pbr{Q(\vs,\va)-V(\vs)}^2},0}.
\end{equation}

\begin{theorem}
\label{soft-optimal}
    Suppose that $f(x)=\log x$, then the optimal policy of problem~\ref{align_soft} satisfies
\begin{equation}
    \label{pi_soft}
    \pi^\star(\va\vert\vs) \propto \mu(\va\vert\vs)\exp{\cbr{-\eta\pbr{Q(\vs,\va)-V(\vs)}^2}}.
\end{equation}
If the exact policy density is known, we can also follow ~\citet{peters2010,peng2019,nair2020} to train our policy $\pi_\phi$ through 
\begin{equation}
\label{aligniql-loss}
    \begin{aligned}
    & \mathop{\mathrm{arg \ min}}_{\phi} \mathbb{E}_{\vs \sim \mathcal{D}^\mu} \left[ \KL \left(\pi^*(\cdot  | \vs) \middle|\middle| \pi_\phi(\cdot  | \vs)\right) \right]  \\
    \approx & \mathbb{E}\sbr{-\exp{\pbr{-\eta\pbr{Q(\vs,\va)-V(\vs)}^2}}\log \pi_{\phi}(\va|\vs)}.    
\end{aligned}
\end{equation}

\end{theorem}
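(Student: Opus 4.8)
The plan is to specialize the general KKT solution~\eqref{pi_s_star_1} to the logarithmic regularizer and then simplify. First I would re-derive~\eqref{pi_s_star_1} exactly as in Theorem~\ref{optimal_policy}: form the (pointwise-in-$\vs$) Lagrangian of problem~\ref{align_soft} with a multiplier $\alpha(\vs)$ for the normalization constraint $\int_\va \pi(\va\vert\vs)\, d\va = 1$ and a multiplier for the non-negativity constraint $\pi(\va\vert\vs)\geq 0$, rewrite $\E_{\va\sim\pi}\sbr{f(\pi/\mu)} = \int_\va \mu\, h_f(\pi/\mu)\, d\va$ with $h_f(x)=xf(x)$, and set the stationarity condition $\partial/\partial\pi = 0$. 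This gives $h'_f(\pi/\mu) = -\alpha(\vs) - \eta(Q(\vs,\va)-V(\vs))^2$ on the support, and the complementary-slackness treatment of the non-negativity multiplier produces the outer $\max\cbr{\cdot,0}$, yielding~\eqref{pi_s_star_1} with $g_f = (h'_f)^{-1}$. (Stationarity in the free variable $V(\vs)$ returns $\E_{\va\sim\pi}\sbr{Q}=V$, i.e.\ the alignment identity~\eqref{align_cs}, which is exactly the content of the remark preceding the theorem.)

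Next I would evaluate $g_f$ for $f(x)=\log x$. Here $h_f(x)=x\log x$, so $h'_f(x)=\log x + 1$ and its inverse is $g_f(y)=\exp(y-1)$. Substituting into~\eqref{pi_s_star_1} gives $\pi^\star = \mu\,\max\cbr{\exp(-\alpha(\vs)-\eta(Q-V)^2-1),\,0}$. Because the exponential is strictly positive, the $\max$ is never active and may be dropped, leaving $\pi^\star(\va\vert\vs) = \exp(-\alpha(\vs)-1)\,\mu(\va\vert\vs)\,\exp\cbr{-\eta(Q(\vs,\va)-V(\vs))^2}$. The prefactor $\exp(-\alpha(\vs)-1)$ depends on $\vs$ alone, so it is precisely the normalizing constant $1/Z(\vs)$ fixed by~\eqref{loss_1}; absorbing it yields the stated proportionality~\eqref{pi_soft}.

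For the second claim I would run the standard weighted-regression projection~\citep{peng2019,nair2020}. Minimizing $\E_{\vs\sim\mathcal{D}^\mu}\sbr{\KL(\pi^*\|\pi_\phi)}$ over $\phi$ discards the $\phi$-independent entropy term $\int \pi^*\log\pi^*\, d\va$, so it is equivalent to maximizing $\E_{\vs}\sbr{\int \pi^*\log\pi_\phi\, d\va}$. Writing $\pi^* = \frac{1}{Z(\vs)}\mu\exp\cbr{-\eta(Q-V)^2}$ converts the inner integral into an expectation over $\mu$, so the objective becomes $\E_{(\vs,\va)\sim\mathcal{D}^\mu}\sbr{\frac{1}{Z(\vs)}\exp\cbr{-\eta(Q-V)^2}\log\pi_\phi(\va\vert\vs)}$; dropping the state-only factor $1/Z(\vs)$ and negating gives~\eqref{aligniql-loss}, where the $\approx$ records both this omission and the replacement of $\E_{\va\sim\mu}$ by empirical dataset samples.

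The computation is essentially mechanical once~\eqref{pi_s_star_1} is in hand, so the only real care is in two places: verifying that $g_f(y)=\exp(y-1)>0$ renders the non-negativity constraint inactive (so the $\max$ disappears cleanly and $\pi^\star$ keeps the full support of $\mu$), and checking that the residual multiplier term is genuinely a function of $\vs$ alone so it can be swept into $Z(\vs)$ without changing the action-conditional shape. The main obstacle is therefore not the log-specialization but the underlying KKT derivation of~\eqref{pi_s_star_1} itself --- in particular, justifying the interchange of differentiation and integration in the functional stationarity condition and the correct complementary-slackness handling of $\pi(\va\vert\vs)\geq0$ --- which parallels the argument already used for Theorem~\ref{optimal_policy}.
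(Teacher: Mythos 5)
Your proposal is correct and follows essentially the same route as the paper's proof in Appendix B.2: form the Lagrangian of problem~\ref{align_soft}, write the KKT stationarity condition $h'_f(\pi/\mu)+\alpha(\vs)+\eta(Q-V)^2-\lambda=0$, handle non-negativity via complementary slackness to obtain~\eqref{pi_s_star_1}, and then specialize to $g_f(y)=\exp(y-1)>0$ so the $\max$ drops and the $\vs$-dependent prefactor is absorbed into the normalizer. Your additional derivation of the weighted-regression objective~\eqref{aligniql-loss} (dropping the entropy term and the state-only factor $1/Z(\vs)$) is the standard AWR projection the paper cites rather than re-proves, and it is handled correctly.
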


\textbf{Compared to AWR:} For $\eta > 0$, \Eqref{pi_soft} favors actions that minimize $(Q(\vs,\va) - V(\vs))^2$. This contrasts with AWR, which prefers actions associated with higher $Q(\vs,\va)$ values. The key distinction arises from AlignIQL’s objective of balancing behavior cloning with policy alignment, whereas AWR seeks to balance behavior cloning with critic exploitation. However, this exploitation overlooks the confidence associated with each $(\vs, \va)$ pair. For instance, although AWR performs well in most scenarios, the estimation of $Q$ may be unreliable in challenging or corrupted tasks. In such cases, AWR continues to assign large weights to high $Q$ values, while AlignIQL instead emphasizes policy alignment by assigning greater weight to $Q$ values that are closer to $V$. We will empirically validate this in Section~\ref{experiment_policy_align}.

\textbf{The Optimality of AlignIQL's Weight:}  As a policy extraction method, the optimality of AlignIQL primarily stems from the optimality of the value function. As observed by~\citet{tarasov2024role}, the value function is typically learned more accurately than the policy itself. Furthermore, AlignIQL can recover the optimal policy under certain conditions. For example, in IQL, the expectile loss~\Eqref{eqn:fit_v_expectiles} approximates the maximum of $Q_{\hat{\theta}}(\vs,\va)$ when $\tau \approx 1$. In this case, we can roughly interpret $V(\vs) = \argmax_{\va \sim \gD} Q(\vs,\va)$. According to \Eqref{pi_soft}, the action $\hat{\va} = \argmax_\va Q(\vs, \va)$ receives a weight of 1, while all other actions are weighted by $\exp\cbr{-\eta (Q(\vs, \va) - V(\vs))^2}$. For fixed $\eta$, these weights are strictly smaller than that of the maximal action. As a result, \Eqref{pi_soft} approximately recovers the optimal policy $\pi(\va|\vs) = \argmax_{\va \sim \gD} Q(\vs,\va)$.

Finally, we show the connection between the solution of problem~\ref{align_p} and problem~\ref{align_soft} through the following Proposition~\ref{connect}.
\begin{proposition}
\label{connect}

Suppose that $\pi^*(\va\vert\vs)$ is a global solution to the convex optimization problem~\ref{align_p}, 
 with its corresponding value function (denoted as $V^*(\vs)$). Then there exists a $\eta$ such that $\pi^*,V^*(\vs)$ is a local minimizer of problem~\ref{align_soft}. (See proof in Appendix~\ref{proposition}.)
%  The global optimal solution to the problem~\ref{align_p} is $p^*$. Then $\pi^*,V^*(\vs)$ is a strict local minimizer of problem~\ref{align_soft} for all  
%  $\eta \leq (k^*-p^*)/h^*,$ where 
%     $h^*=\E_{\substack{\vs\sim d^\pi(\vs)\\\va\sim\pi^*(\va|\vs)}}\sbr{\pbr{Q(\vs,\va)-V^*(\vs)}^2},$
%     $$
%     k^*=\min_{\substack{\pi \\ s.t. \pi\in\gT}}  \E_{\substack{\vs\sim d^\pi(\vs)\\ \va\sim\policy}}\sbr{f\pbr{\frac{\policy}{\mu(\va\vert\vs)}}+\eta\pbr{Q(\vs,\va)-V(\vs)}^2},$$
% $$\gT=\cbr{\pi\vert\pi(\va\vert\vs)\geq0,\int_\va\pi(\va\vert\vs)d\va=1,V(\vs)=\E_{\va\sim\pi(\va|\vs)}\sbr{Q(\vs,\va)},\pi\in\mathring{U}(\pi^*,\sigma)}.$$

\end{proposition}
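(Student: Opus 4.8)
The plan is to treat Proposition~\ref{connect} as a KKT-plus-second-order verification. Since problem~\ref{align_p} is convex and Slater's condition holds (Remark following Assumption~\ref{as2}), the global solution $\pi^*$ comes equipped with Lagrange multipliers $\alpha^*(\vs),\beta^*(\vs)$ satisfying the stationarity relation of Theorem~\ref{optimal_policy}, and with the alignment identity $\E_{\va\sim\pi^*}[Q(\vs,\va)]=V^*(\vs)$ holding exactly. My strategy is to show that $(\pi^*,V^*)$ satisfies the first-order (KKT) conditions of problem~\ref{align_soft} and then to establish second-order sufficiency for a suitable choice of $\eta$, which together yield local minimality.

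First I would dispatch the $V$-direction, which is the clean part. For fixed $\pi^*$ the only $V$-dependent term of the \ref{align_soft} objective is $\eta\,\E_{\va\sim\pi^*}[(Q-V)^2]$; setting its derivative in $V$ to zero gives $V=\E_{\va\sim\pi^*}[Q]$, which is exactly $V^*$ by the alignment identity inherited from \ref{align_p}. Hence $V^*$ is the unique minimizer in the $V$-coordinate for every $\eta>0$, reproducing the observation in the Remark that the penalty implicitly recovers \Eqref{align_cs}. I would then write the per-state Lagrangian of \ref{align_soft} with a normalization multiplier and complementarity for $\pi\ge 0$: its $\pi$-stationarity condition reads $h_f'(\pi^*/\mu)=\tilde\lambda(\vs)-\eta(Q-V^*)^2$ on the support, i.e. the closed form \Eqref{pi_s_star_1}, which I would compare against the condition $h_f'(\pi^*/\mu)=-\alpha^*(\vs)-\beta^*(\vs)Q$ coming from Theorem~\ref{optimal_policy}.

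Next I would turn to second order. I would build the critical cone (feasible directions preserving normalization and the active nonnegativity set) and evaluate the Hessian of the \ref{align_soft} Lagrangian, which is block-structured in $(\pi,V)$: the $\pi$-block carries the strictly positive $f$-curvature $h_f''(\pi^*/\mu)/\mu>0$ coming from strict convexity of $h_f$, while the penalty couples the $\pi$- and $V$-blocks and contributes the positive term $2\eta\,\pi^*$ in the $V$-block. I would then choose $\eta$ so that the reduced Hessian (the Schur complement in $V$) is positive definite on the critical cone, concluding that $(\pi^*,V^*)$ is a strict local minimizer.

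The hard part will be reconciling the two first-order conditions: the alignment multiplier of \ref{align_p} enters the $\pi$-gradient through a term linear in $Q$, whereas the penalty of \ref{align_soft} enters through a term quadratic in $(Q-V^*)$, so the two stationarity maps for $\pi^*$ cannot literally coincide for $\eta>0$ unless the penalty gradient along feasible directions is absorbed into the normalization direction. Making this precise is where the real work lies: using $V^*=\E_{\va\sim\pi^*}[Q]$ I would show that, restricted to directions tangent to the simplex, the penalty gradient reduces to $\eta$ times $(Q-V^*)^2$ minus a constant, and then argue that a global solution of \ref{align_p} already controls exactly these directions, so that for an appropriately chosen $\eta$ no first-order feasible descent direction of \ref{align_soft} survives at $(\pi^*,V^*)$. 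A secondary technical point is the dependence of the objective on the state-visitation weighting $d^\pi(\vs)$; I would neutralize it by noting that the constraints in both problems decouple per state, so the analysis can be carried out state-wise with $d^\pi(\vs)>0$ acting only as a positive weight.
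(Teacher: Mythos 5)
Your route is genuinely different from the paper's, and it contains a gap that you yourself flag as ``the hard part'' but that in fact cannot be closed along the lines you propose. The obstruction is at first order, before any Hessian analysis becomes relevant. After eliminating $V$ (your first step, which matches the paper and is fine), the reduced \ref{align_soft} objective at fixed $\vs$ is $\int \pi\, h_f(\pi/\mu)\,\tfrac{d\va}{\ldots} + \eta\,\mathrm{Var}_{\pi}\pbr{Q}$, and its directional derivative at $\pi^*$ along a simplex-tangent direction $d$ supported on $\{\pi^*>0\}$ (so $\int d\,d\va=0$) is
\begin{equation*}
\int d(\va)\sbr{h_f'\pbr{\tfrac{\pi^*}{\mu}} + \eta\pbr{Q(\vs,\va)-V^*(\vs)}^2}d\va
\;=\;\int d(\va)\sbr{-\beta^*(\vs)\,Q(\vs,\va) + \eta\pbr{Q(\vs,\va)-V^*(\vs)}^2}d\va ,
\end{equation*}
where the second equality substitutes the \ref{align_p} stationarity condition $h_f'(\pi^*/\mu)=-\alpha^*(\vs)-\beta^*(\vs)Q(\vs,\va)$ from Theorem~\ref{optimal_policy} and drops the $\va$-independent terms, which integrate to zero against $d$. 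For this to be nonnegative for all such $d$ (which range over a linear subspace wherever $\pi^*$ is interior), the bracket must be constant in $\va$ on the support of $\pi^*$, i.e.\ $-\beta^*(\vs)Q+\eta(Q-V^*)^2$ must be constant --- generically false for every $\eta>0$ unless $Q$ is constant on the support. So $(\pi^*,V^*)$ is generally not a KKT point of \ref{align_soft}, your ``no first-order feasible descent direction survives'' claim fails, and the subsequent second-order/Schur-complement step has nothing to attach to.

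The paper's proof avoids stationarity entirely: it also eliminates $V$ first, but then argues by direct comparison of objective values, taking $k^*$ to be the minimum of the reduced objective over a punctured neighborhood $\gT$ of $\pi^*$ inside the feasible set and choosing $\eta$ from the relation $p^*+\eta h^*=k^*$, where $p^*$ is the optimal value of \ref{align_p} and $h^*=\E_{\va\sim\pi^*}\sbr{(Q-V^*)^2}$. That argument is itself informal (both sides of the relation depend on $\eta$, and the existence of a solution is asserted rather than proved), but it is structurally compatible with the proposition in a way your KKT-based program is not. If you want to salvage your approach, you would have to either work with the value-comparison formulation as the paper does, or restrict to the degenerate cases where the two stationarity maps coincide.
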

\begin{remark}
    Proposition~\ref{connect} indicates that we can obtain the solution to problem~\ref{align_p} by solving problem~\Eqref{align_soft}. Because KKT conditions are the first-order necessary for a solution in nonlinear programming to be optimal, the solution to problem~\ref{align_p} can be written in
    the form of \Eqref{pi_soft}. This implies that if we train $Q(\vs,\va)$ and $V(\vs)$ using IQL and $\eta$ satisfies Proposition~\ref{connect}, we can extract the implicit policy from the value function using \Eqref{pi_soft}. 
\end{remark}

\textbf{Two ways to use AlignIQL:} There are two ways to utilize our method in offline RL (corresponding to Algorithm~\ref{alg:iql} and Algorithm~\ref{alg:extraction}, Suppose that $f(x)=\log x$). 
\begin{itemize}
    \item \textbf{Gaussian-based implementation:} We employ \Eqref{aligniql-loss} to train the policy, which requires the exact probability density of the current policy (Algorithm~\ref{alg:iql}). Notably, accurately modeling \Eqref{aligniql-loss} necessitates that the policy $\pi_\phi$ possess strong distribution modeling capabilities, as the squared term increases the complexity of the learned distribution. This observation motivates our adoption of a diffusion-based implementation of AlignIQL.

    \item \textbf{Diffusion-based implementation:} We first use the learned diffusion-based behavior model $\mu_\phi(\va|\vs)$ to generate $N$ action samples. These actions are then evaluated using weights from~\Eqref{pi_soft} or \Eqref{pi_star} (Algorithm~\ref{alg:extraction}). In this setting, the hyperparameter $N$ has a greater influence on performance than $\eta$, as a higher $N$ is more likely to find the “lucky” action that satisfies $\hat{\va} = \argmax_\va Q(\vs, \va)$.  
\end{itemize}

 Note that in both AlignIQL-hard and AlignIQL, we do not impose a limit on the loss function of the $Q-V$, which means that our conclusion can be generalized to the arbitrary critic loss function and the arbitrary sub-optimal value function. To summarize, both the AlignIQL-hard and AlignIQL are "IQL-style" algorithms, which means the training of actor and critic are decoupled and the critic is learned by expectile regression. The difference between AlignIQL-hard and AlignIQL lies in the calculation of weights and the necessity of training multiplier networks. 
% \ls{we should also give some comments on the designed algorithm. }

\section{Experiments}
\label{experiment}
In this section, we empirically evaluate the advantages of policy alignment and the effectiveness of AlignIQL through D4RL AntMaze tasks, noise-corrupted data, and vision-based experiments.

\subsection{D4RL AntMaze Results}
\label{experiment_policy_align}
% \ls{We should also discuss whether the experimental results align with our theoretical findings.}
To validate the advantages of policy alignment, we first compare the performance of D-AlignIQL (Diffusion-based AlignIQL) against other diffusion-based baselines on the AntMaze tasks. The AntMaze tasks~\citep{fu2020}, which involve controlling an ant robot to navigate through a maze, are particularly challenging due to their increased demand for trajectory stitching. We choose the AntMaze tasks because, as noted in Section~\ref{aligniql-hard}, $\beta(\vs)$ is generally negative in MuJoCo tasks. Consequently, AWR alone is sufficient to achieve policy alignment, making AlignIQL redundant. We use D-AlignIQL instead of the Gaussian-based AlignIQL because implementing policy alignment requires strong policy modeling capacity, as discussed in Section~\ref{soft_solve}. Implementation details and additional empirical results are provided in Appendix~\ref{details}. We also include the full results of Diffusion-based AlignIQL and Gaussian-based AlignIQL in Appendix~\ref{add_exps}.

\textbf{Baselines:} We include DiffusionQL~\citep{wang2022b}, QGPO~\cite{lu2023}, EDP~\cite{kang_efficient_nodate}, SRPO~\citep{chen_score_2023}, DTQL~\cite{chen2024diffusion}, and SfBC~\citep{chen2022} as diffusion-based baselines due to their strong performance in offline RL. Notably, SfBC is a diffusion+AWR method that first trains a diffusion-based policy and then selects actions based on $Q$ values.

As shown in Table~\ref{tbl:rl_results_maze}, D-AlignIQL achieves the highest average performance and ranks among the top two in 5 out of 6 tasks, matching or surpassing other diffusion-based offline RL methods. Notably, SfBC corresponds to diffusion+AWR, and IDQL is also motivated by policy alignment. D-AlignIQL outperforms both on 5 out of 6 tasks, indicating the superiority of our policy alignment weighting. Compared to other diffusion-based methods, D-AlignIQL also demonstrates consistently superior performance, attributable to its policy alignment.

\begin{table}[htbp]
\caption{We evaluate the performance of our method alongside other diffusion-based baselines on the AntMaze tasks. For the baseline methods, we report the best results as presented in their original papers. The reported metrics are the average normalized scores at the end of training, along with the standard deviation across $10$ random seeds. The top two results are highlighted in bold. The prefix “D-” denotes a diffusion-based implementation.}
\label{tbl:rl_results_maze}
\centering
\small
\resizebox{\textwidth}{!}{%
\begin{tabular}{llcccccccc}
\toprule
\multicolumn{1}{c}{\bf Dataset} & \multicolumn{1}{c}{\bf Env} &  \multicolumn{1}{c}{\bf Diffusion-QL} &  \multicolumn{1}{c}{\bf QGPO} &  \multicolumn{1}{c}{\bf EDP}&  \multicolumn{1}{c}{\bf SRPO}&  \multicolumn{1}{c}{\bf DTQL}& \multicolumn{1}{c}{\bf SfBC}  & \multicolumn{1}{c}{\bf IDQL-A} & \multicolumn{1}{c}{\bf D-AlignIQL (ours)}\\
\midrule

Default       & AntMaze-umaze    & $93.4$& $\textbf{96.4}$ & $94.2$  & $\textbf{97.1}$& $94.8$ & $92.0$     & $94.0$  & $94.8$\scriptsize{±3.2} \\
Diverse       & AntMaze-umaze         & $66.2$ & $74.4$& $79.0$ & $82.1$& $78.8$& \textbf{$\textbf{85.3}$}  & $80.2$  & $\textbf{82.4}$\scriptsize{±4.4} \\
\midrule
Play          & AntMaze-medium    & $76.6$ & $83.6$ & $81.8$& $80.7$& $79.6$ & $81.3$   & $\textbf{84.5}$  & \textbf{$\textbf{87.5}$}\scriptsize{±2.5} \\
Diverse       & AntMaze-medium               & $78.6$ & $83.8$ & $82.3$& $75.0$& $82.2$& $82.0$    & $\textbf{84.8}$  & \textbf{$\textbf{85.0}$}\scriptsize{±5.0} \\
\midrule
Play          & AntMaze-large                & $46.4$ & $\textbf{66.6}$& $42.3$& $53.6$& $52.0$ & $59.3$    & $63.5$ & $\textbf{65.2}$\scriptsize{±9.6} \\
Diverse       & AntMaze-large          & $57.3$ & $64.8$ & $60.6$& $53.6$& $54.0$& $45.5$   & $\textbf{67.9}$  & $\textbf{66.4}$\scriptsize{±9.7} \\
\midrule
\multicolumn{2}{c}{\bf Average } & $69.8$ & $78.3$& $73.4$& $73.6$& $73.6$&$74.2$ & $79.1$  & $\textbf{$\textbf{80.2}$}$ \\
% \midrule
% \multicolumn{2}{c}{\bf{\# Diffusion steps}} & $5$ & $15$& $1$& $1$& $1$& $15$  & $5$  & $5$ \\
\bottomrule
\end{tabular}
}
\end{table}

\subsection{Noise Data and Vision-based Experiments}
In this section, we evaluate the benefits of policy alignment and the performance of Gaussian-based AlignIQL (hereafter referred to as AlignIQL) on noise-corrupted data and vision-based tasks. We adopt the Gaussian-based version of AlignIQL because the baselines used for comparison are based on Gaussian policies, and to demonstrate that our approach generalizes to arbitrary policy classes.

\textbf{Random Corruption.} Following \citet{yang2023towards}, we evaluate the performance of our method under various data corruption scenarios, including random perturbations to states, actions, rewards, and next-states. Corruption is introduced by adding random noise to the corrupted elements in a proportion $c$ of the dataset, with the corruption magnitude controlled by $\epsilon$. In our experiments, we set $c = \epsilon = 0.5$. Further details on the corruption process are provided in Appendix~\ref{details}.
 \begin{table*}[!htbp]
\caption{Results of Robust Experiment in Halfcheetah-medium-replay-v2. The reported metrics are the average normalized scores at the end of training, along with the
standard deviation across 5 random seeds.}
\label{table:robust}
\centering
% \small
\begin{tabular}{l|c|c|c|c|c|c}
\hline
  \multicolumn{1}{c|}{} & \multicolumn{5}{c}{\bf Halfcheetah}\\ \hline
\multicolumn{1}{c|}{\bf Method} & \multicolumn{1}{c|}{Reward} & \multicolumn{1}{c|}{Action} & \multicolumn{1}{c|}{Dynamics} & \multicolumn{1}{c|}{Observation}& \multicolumn{1}{c|}{Mix Attack} & \multicolumn{1}{c}{Average}  \\ \hline
% \midrule
\multicolumn{1}{c|}{\bf AlignIQL}  &      $40.6$\scriptsize{±0.7}       &  $40.1$\scriptsize{±1.4}   &  $37.2$\scriptsize{±9.2} &  $\textbf{30.1}$\scriptsize{±2.6}    &  $\textbf{29.1}$\scriptsize{±13.5} & $\textbf{35.4}$     \\ \hline

% \midrule
\multicolumn{1}{c|}{\bf IQL} &      $41.9$\scriptsize{±1.3}       &  $39.6$\scriptsize{±1.0} &    $\textbf{37.8}$\scriptsize{±13.4} &  $25.6$\scriptsize{±3.0} &  $24.4$\scriptsize{±12.1}  &  $33.9$    \\ \hline
\multicolumn{1}{c|}{\bf CQL} &      $\textbf{43.6}$\scriptsize{±0.8}       &  \textbf{$\textbf{44.8}$}\scriptsize{±0.8} &    $0.06$\scriptsize{±0.76} &  $28.5$\scriptsize{±16.8} &    $2.3$\scriptsize{±3.5} & $23.9$    \\ \hline
% \bottomrule
\end{tabular}
% \vspace{-0.2in}
\end{table*}

As shown in Table~\ref{table:robust}, AlignIQL achieves the highest average scores among all evaluated methods. More importantly, it exhibits superior robustness to observation attacks compared to IQL. While CQL performs well under action, observation, and reward attacks, it fails to learn under dynamics attacks. Because policy alignment depends on the value function, AlignIQL’s performance degrades under reward corruption. Nevertheless, it shows enhanced robustness to observation attacks by assigning higher weights to actions where $Q$ closely approximates $V(\vs)$. Since $V(\vs)$ is learned via a neural network, it tends to be robust to corrupted inputs (e.g., noisy observations), as similar states typically yield similar $V(\vs)$. In contrast, in reinforcement learning, $Q(\vs, \va)$ may vary substantially across similar states. This discrepancy likely explains the superior performance of AlignIQL under observation attacks.

\textbf{Vision-based Control.} To further evaluate the benefits of policy alignment and performance of AlignIQL, we report the results of AlignIQL and IQL on the Atari tasks ~\citep{agarwal2020optimisticperspectiveofflinereinforcement}. Specifically, we choose three image-based Atari games with discrete action spaces: Breakout, Qbert, and Seaquest. We use d3rlpy, a modularized offline RL library that includes several SOTA offline RL algorithms and offers an easy-to-use wrapper for the offline Atari datasets introduced by \citet{DBLP:journals/corr/abs-2108-13264}. 
To increase the task difficulty, we use only $1\%$ or $0.5\%$ of the transitions from all epochs in the original datasets. ($1\text{M}\times50\text{epoch}\times 1\% \quad\text{or}\quad 0.5\%$)

As shown in Alg~\ref{alg:iql}, the only difference between AlignIQL and IQL lies in the method of extracting policies. In Table~\ref{table:atari}, AlignIQL achieves the best performance in $5$ out of $6$ games and exhibits a smaller standard deviation compared to IQL. We also observed that in certain tasks, AlignIQL or IQL performs better on smaller datasets. This phenomenon was also observed when training CQL on Atari tasks, as reported in ~\citet{xuOfflineRLNo2023}.

 \begin{table*}[!htbp]
\caption{Performance in setting with $1\%$ or $0.5\%$ Atari dataset over $5$ random seeds. For brevity, we refer to Discrete AlignIQL as AlignIQL in this Table.}
\label{table:atari}
\centering
\small
\begin{tabular}{l|cc|cc|cc}
\hline
  & \multicolumn{2}{c|}{\bf Breakout} & \multicolumn{2}{c|}{\bf Qbert} & \multicolumn{2}{c}{\bf Seaquest}\\ 
  \hline
\multicolumn{1}{c|}{\bf Method} & \multicolumn{1}{c}{\bf $1\%$} & \multicolumn{1}{c|}{\bf $0.5\%$} 
& \multicolumn{1}{c}{\bf $1\%$} & \multicolumn{1}{c|}{\bf $0.5\%$} & \multicolumn{1}{c}{\bf $1\%$} & \multicolumn{1}{c}{\bf $0.5\%$} \\ \hline
% \midrule
\multicolumn{1}{c|}{\bf AlignIQL}  &      $\textbf{9.23}\pm 0.8$       &  $\textbf{7.13}\pm2.5$   &  $\textbf{7170}\pm877$ &  $\textbf{7512}\pm548$ &  $192.7\pm30.02$ &  $\textbf{371.3}\pm1.1$    \\ \hline

% \midrule
\multicolumn{1}{c|}{\bf IQL} &      $6.87\pm1.1$       &  $5.3\pm3.2$ &    $4160\pm1473$ &  $3773.3\pm780.2$ &  $\textbf{238.7}\pm21.6$ &  $306.7\pm25.2$    \\ \hline
% \bottomrule
\end{tabular}
% \vspace{-0.2in}
\end{table*}

\subsection{Ablation Study}
In this section, we assess the impact of different regularizers and action samples $N$ in D-AlignIQL. 

\textbf{Regularizers.} As shown in Table~\ref{table:regularizer}, we find that the performance of the linear regularizer is comparable to the results of D-AlignIQL in Table~\ref{table:regularizer}. This is because both place more weight on actions with higher $-\pbr{Q(\vs,\va)-V(\vs)}^2$. (See Appendix~\ref{details} for more details.) For $f(x)=x-1$ in D-AlignIQL-hard, we found that it is susceptible to hyperparameters, especially the learning rate of the Lagrange multiplier network, and both showed a certain decline in performance by the end of training. We attribute this performance drop to the susceptibility of the multiplier network to hyperparameters, and future improvements to the multiplier network and hyperparameters may address this issue. 

\begin{table}[ht]
    \centering
        \caption{Performance of different regularizers in D-AlignIQL and D-AlignIQL-hard. All the results are evaluated over $10$ random seeds.}
         \label{table:regularizer}

\begin{tabular}{c|c|c|c|c}
\hline
\multirow{2}{*}{Regularizers}&
\multicolumn{2}{c|}{D-AlignIQL} &
\multicolumn{2}{c}{D-AlignIQL-hard} \\
\cline{2-5}
  &umaze-p &umaze-d &umaze-p &umaze-d  \\
\hline
$f(x)=\log x$        &  $94.8$   &  $82.4$        & $84.7$ &   $74.0$  \\
\hline
$f(x)=x-1$  & $95.7$ &    $86.1$  &  $91.1$ &  $73.6$\\
\hline
\end{tabular}
   
\end{table}

\textbf{Action Samples $N$.}  As shown in Table~\ref{table:ablation}, D-AlignIQL achieves higher average performance with lower variance compared to IDQL, indicating greater robustness to variations in $N$. This robustness arises from the fact that out-of-distribution (OOD) actions generated by the policy network (e.g., via a diffusion model) may not exactly match $V(\vs)$ but can still yield high $Q(\vs, \va)$ values~\citep{fujimoto2018}. Table~\ref{table:ablation} further demonstrates that the performance of D-AlignIQL improves with increasing $N$, while IDQL does not exhibit a similar trend.

\begin{table}[!htbp]
\caption{Quantitative Results of D-AlignIQL and IDQL on AntMaze Large tasks (Play and Diverse).}
\label{table:ablation}
\centering
% \small
\begin{tabular}{l|c|c|c|c}
\hline
  & \multicolumn{1}{c|}{\bf N=16} & \multicolumn{1}{c|}{\bf N=64} &\multicolumn{1}{c}{\bf N=256}& \multicolumn{1}{c}{\bf Average}\\ \hline
\multicolumn{1}{c|}{\bf IDQL} &      $72.0$       &  $66.5$ &    $58.8$ &   65.7\scriptsize{±5.4}  \\ 
\hline
 \multicolumn{1}{c|}{\bf D-AlignIQL} & $65.8$ &  $70.2$ &  $70.7$ & 68.9\scriptsize{±2.2}\\
\hline

\end{tabular}

\end{table}
Overall, compared to IDQL, the weights computed by our method not only have better theoretical properties (applicable to any Q-loss, without requiring optimal $V$) but also perform better in practice. 

\section{Conclusion}
In our work, we define the implicit policy-finding problem in IQL and propose two practical algorithms AlignIQL-hard and AlignIQL to solve it. The optimal policy (Theorem~\ref{optimal_policy}) in AlignIQL-hard shows that it is feasible to extract policy with AWR in certain cases, which builds the bridge between the Implicit Q-learning and Weighted Regression. Our theoretical findings also extend the policy alignment of IDQL to arbitrary critic loss and value functions. Besides the theoretical findings, we also verify the effectiveness of our algorithm on D4RL datasets. Experimental results show that compared to other IQL-style algorithms, our algorithm achieves SOTA performance and is more stable, especially in sparse reward tasks. One future work is to explore better methods for training multiplier networks and explore the impact of different regularization functions of problem~\ref{align_p}. Another future work is to extend our approach to fields of safe RL and offline-to-online (O2O) learning. In safe RL, prior works~\citep{zheng2023feasibility,cao2024offline} have used IQL to learn the Q-function. Investigating how to ensure policy alignment while satisfying safety constraints is an interesting research direction.

% \textbf{Limitation.} Our work mainly focus on regularization function $f(x)=\log x$ due to its simplicity. However, different regularization functions can result in different types of policies. For example, SQL~\citep{xuOfflineRLNo2023} shows using $f(x)=x-1$ can introduce sparsity in learning the state-value function. In future work, we can explore other regularization functions to obtain better policy.

% \textbf{Boarder Impact}. Our method will promote the development of offline reinforcement learning, thus facilitating the implementation of offline reinforcement learning in practical scenarios, such as robotic control. Since our work primarily focuses on RL theory, it will not raise ethical issues.

% \begin{ack}
% Use unnumbered first level headings for the acknowledgments. All acknowledgments
% go at the end of the paper before the list of references. Moreover, you are required to declare
% funding (financial activities supporting the submitted work) and competing interests (related financial activities outside the submitted work).
% More information about this disclosure can be found at: \url{https://neurips.cc/Conferences/2025/PaperInformation/FundingDisclosure}.

% Do {\bf not} include this section in the anonymized submission, only in the final paper. You can use the \texttt{ack} environment provided in the style file to automatically hide this section in the anonymized submission.
% \end{ack}

% \section*{References}
\clearpage
\bibliographystyle{plainnat}
\bibliography{neurips_2025}

% \bibliographystyle{neurips}

%%%%%%%%%%%%%%%%%%%%%%%%%%%%%%%%%%%%%%%%%%%%%%%%%%%%%%%%%%%%
\newpage
%%%%%%%%%%%%%%%%%%%%%%%%%%%%%%%%%%%%%%%%%%%%%%%%%%%%%%%%%%%%
\appendix
\textbf{Border Impact.} Offline reinforcement learning (RL) seeks to learn a policy from a fixed dataset, analogous to supervised learning; however, challenges such as extrapolation error and function approximation make offline RL significantly more difficult than supervised learning and pre-training. Our method advances the field of offline RL by enhancing the understanding of IQL-style approaches, thereby promoting their application in real-world scenarios such as robotic control, without directly introducing substantial ethical or societal concerns.

\textbf{Limitation.} The optimality of the policy extracted by AlignIQL depends on the quality of the learned value function. Consequently, applying policy alignment with a poorly learned value function cannot yield an optimal or suboptimal policy. However, as noted in our main paper, value functions are generally learned more reliably than policies in current offline RL settings, making this issue less concerning in practice.

\section*{Index of the Appendix}
In the following, we briefly recap the contents of the Appendix.  

– Appendix A provides additional discussion about related works and extra background. 

– Appendix B reports all proofs, derivations, and some extra theoretical analysis.  

- Appendix C reports all the pseudocode of AlignIQL.

– Appendix D reports additional experiments on our method, including results of AlignIQL-hard, runtime analysis, full D4RL results, and the corresponding ablation study, along with relevant implementation details.

\section{Related Works}
\textbf{Diffusion Model in Offline RL}. Due to our method using the diffusion model for modeling behavior policy, we review works that incorporate the Diffusion model in offline RL. There exist several works that introduce the diffusion model to RL. Diffuser~\citep{janner2022} uses the diffusion model to directly generate trajectory guided with gradient guidance or reward. DiffusionQL~\citep{wang2022b} uses the diffusion model as an actor and optimizes it through the TD3+BC-style objective with a coefficient $\eta$ to balance the two terms. AdaptDiffuser~\cite {liang2023} uses a diffusion model to generate extra trajectories and a discriminator to select desired data to add to the training set to enhance the adaptability of the diffusion model. DD~\citep{ajay2022}  uses a conditional diffusion model to generate a trajectory and compose skills. Unlike Diffuser, DD diffuses only states and trains inverse dynamics to predict actions. QGPO~\cite{lu2023} uses the energy function to guide the sampling process and proves that the proposed CEP training method can get an unbiased estimation of the gradient of the energy function under unlimited model capacity and data samples. SfBC~\citep{chen2022} first trains a diffusion-based policy and then selects actions based on the $Q$ value, similar to AWR.IDQL~\citep{hansen-estruch2023} reinterpret IQL as an Actor-Critic method and extract the policy through sampling from a diffusion-parameterized behavior policy with weights computed from the IQL-style critic. EDP~\citep{kang_efficient_nodate} focuses on boosting sampling speed through approximated actions. SRPO~\citep{chen_score_2023} uses a Gaussian policy in which the gradient is regularized by a pretrained diffusion model to recover the IQL-style policy. DTQL~\citep{chen2024diffusion} distills DiffusionQL into a one-step policy using a diffusion trust region loss. Our method is distinct from these methods because we aim to
align the implied policy with the value function. 

\subsection{Diffusion model}
%%%%%%%%%%%%%%%%%%%%%%%%%%%%%%%%%%%%%%%%%%%%%%%%%%%%%%%%%%%%%%%%%%%%%%%%%%%%%%%%%%%
%%%%%%%%%%%%%%%%%%%%%%%%%%%%%%%%%%%%%%%%%%%%%%%%%%%%%%%%%%%%%%%%%%%%%%%%%%%%%%%%%%%
%%%%%%%%%%%%%%%%%%%%%%%%%%%%%%%%%%%%%%%%%%%%%%%%%%%%%%%%%%%%%%%%%%%%%%%%%%%%%%%%%%%
%%%%%%%%%%%%%%%%%%%%%%%%%%%%%%%%%%%%%%%%%%%%%%%%%%%%%%%%%%%%%%%%%%%%%%%%%%%%%%%%%%%
%%%%%%%%%%%%%%%%%%%%%%%%%%%%%%%%%%%%%%%%%%%%%%%%%%%%%%%%%%%%%%%%%%%%%%%%%%%%%%%%%%%
\textbf{Diffusion Probabilistic Model (DPM).} 
Diffusion models~\citep{sohl-dickstein2015,ho2020,song2019} are composed of two processes: the forward diffusion process and the reverse process. In the forward diffusion process, we gradually add Gaussian noise to the data $\vx_0 \sim q(\vx_0)$ in $T$ steps. The step sizes are controlled by a variance schedule $\beta_i$:
\begin{equation}
\begin{aligned}
                &q(\vx^{1:T}  \,|\,  \vx^0) :=  \textstyle \prod_{i=1}^T q(\vx^i  \,|\,  \vx^{i-1}), \\
                &q(\vx^i  \,|\,  \vx^{i-1}) := \gN (\vx^i; \sqrt{1 - \beta_i} \vx^{i-1}, \beta_i %
        \mI).
\end{aligned}
\end{equation}

In the reverse process, we can recreate the true sample $\vx_0$ through $p(\vx^{i-1}\vert\vx^{i})$:
\begin{equation}
    \begin{aligned}
            p(\vx) & = \int p(\vx^{0:T})d\vx^{1:T}\\
            &=\int\gN (\vx^T; \mathbf{0},\mI)\prod_{i=1}^Tp(\vx^{i-1}\vert\vx^{i})d\vx^{1:T}.
    \end{aligned}
    \label{reverse}
\end{equation}
The training objective is to maximize the ELBO of $\E_{\vq_{x_0}}\sbr{\log p(\vx_0)}$. Following DDPM~\citep{ho2020}, we  use the simplified surrogate loss 
\begin{equation}
\label{diffusion_loss}
    \gL_d(\phi) = \E_{i \sim \sbr{1,T}, \rvepsilon \sim \gN(\mathbf{0}, \mI), \vx_0 \sim q} \sbr{|| \rvepsilon - \rvepsilon_\phi(\vx_i, i) ||^2}
\end{equation}

to approximate the ELBO. After training, sampling from the diffusion model is equivalent to running the reverse process. 

\textbf{Conditional DPM.} 
There are two kinds of conditioning methods: classifier-guided~\citep{dhariwal2021} and classifier-free~\citep{ho2022}. The former requires training a classifier on noisy data $\vx_i$ and using gradients $\nabla_\vx\log f_\Phi(\vy\vert\vx_i)$ to guide the diffusion sample toward the conditioning information $\vy$. The latter does not train an independent $f_\Phi$ but combines a conditional noise model $\epsilon_\phi(\vx_i, i,\vs)$ and an unconditional model $\epsilon_\phi(\vx_i, i)$ for the noise. The perturbed noise $w\epsilon_\phi(\vx_i, i)+(w+1)\epsilon_\phi(\vx_i,i,\vs)$ is used to later generate samples. However \citep{pearce2023imitating} shows this combination will degrade the policy performance in offline RL. Following~\citep{pearce2023imitating,wang2022b} we solely employ a conditional noise model $\epsilon_\phi(\vx_i,i,\vs)$ to construct our noise model ($w=0$).

\subsection{Implicit Diffusion Q-learning (IDQL)}
\label{sec:idql}
\textbf{Implicit Diffusion Q-learning (IDQL)}. To find the implicit policy in the learned value function, IDQL~\citep{hansen-estruch2023} generalizes the value loss in \Eqref{eqn:fit_v_expectiles} with an arbitrary convex loss $U$ on the difference $Q-V$. 
\begin{equation}
\label{eqn:general_IQL}
    V^*(\vs) = \argmin_{V(\vs)} \mathbb{E}_{\va \sim \mu(\va|\vs)}[U(Q(\vs,\va) - V(\vs))] = \argmin_{V(\vs)} \mathcal{L}_{V}^U(V(\vs)).
\end{equation}
Under some assumptions about $U$, IDQL derives the implicit policy in optimal $V$ defined in \Eqref{eqn:general_IQL} 
\begin{equation}
\label{eqn:importance_weight}
    w(\vs,\va) = \frac{|U'(Q(\vs,\va) - V^*(\vs))|}{|Q(\vs,\va) - V^*(\vs)|},
\end{equation}
which yields an expression for the implicit actor as $\pi_{\text{imp}}(\va|\vs) \propto \mu(\va|\vs) w(\vs,\va)$. 
For expectile loss $f(u) = L_2^\tau(u)$ (from \Eqref{eqn:fit_v_expectiles}), the weight of IDQL is 
\begin{equation}
\label{eq:Expectile_Weights}
    w_2^{\tau}(s, a) = |\tau - \mathbbm{1}(Q(s,a) < V_\tau^2(s))|.
\end{equation}

%%%%%%%%%%%%%%%%%%%%%%%%%%%%%%%%%%%%%%%%%%%%%%%%%%%%%%%%%%%%

\section{Proofs and Derivations}

\label{all_prof}
\subsection{Proof of Theorem~\ref{optimal_policy}}
\label{theorem1}
\begin{proof}
The Lagrange function of \Eqref{align_p} is written as follows
\begin{equation}
\label{lagrange}
    \begin{aligned}
        L(\pi,\alpha(\vs),\beta(\vs) ,\lambda)&=\E_{\vs\sim d^\pi(\vs),\va\sim\pi(\va|\vs)}\sbr{f(\frac{\pi(\va|\vs)}{\mu(\va|\vs)})}-\E_{\vs\sim d^\pi(\vs),\va\sim\pi(\va|\vs)}\sbr{\lambda(\va|\vs)\pi(\va|\vs)}\\
        &+\E_{\vs\sim d^\pi(\vs)}\sbr{\alpha(\vs)\pbr{\int_\va\pi(\va|\vs)d\va -1}}+\\
        &\E_{\vs\sim d^\pi(\vs)}\sbr{\beta(\vs)\pbr{\E_{\va\sim\pi(\va|\vs)}\sbr{Q(\vs,\va)}-V(\vs)}},
    \end{aligned}
\end{equation}
where $d^\pi(\vs)$ represents the state distribution induced by policy $\pi$, $\alpha(\vs)$, $\beta(\vs) $, and $\lambda$ are Lagrangian multipliers for the equality and inequality constraints respectively.

Let $h_f(x)=xf(x)$. Then for all states and actions, the KKT conditions can be written as follows
\begin{align}
    &\pi(\va|\vs)\geq0\\
    \label{c1}
    &\int_\va\pi(\va|\vs)d\va=1\\
    \label{c2}
    &\E_{\va\sim\pi(\va|\vs)}\sbr{Q(\vs,\va)-V(\vs)}=0\\
    &\lambda(\va|\vs)\geq0\\
    \label{comple}
    &\lambda(\va|\vs)\pi(\va|\vs)=0\\
    \label{diff}
    &h_f'(\frac{\pi(\va|\vs)}{\mu(\va|\vs)})+\alpha(\vs)+\beta(\vs) Q(\vs,\va)-\lambda(\va|\vs)=0
\end{align}
We eliminate $d^\pi(\vs)$ due to irreducible Markov chain assumption. Note that in our derivation, we assume that $V(\vs)$ and $Q(\vs,\va)$ are known.

Since $h_f'$ is a strictly increasing function, its inverse function exists and is also a strictly increasing function. Let $g_f=(h_f')^{-1}(x)$ be its inverse function. 
From \Eqref{diff}, we can get 
\begin{equation}
    \pi(\va\vert\vs) = \mu(\va\vert\vs)g_f\pbr{\lambda(\va|\vs)-\alpha(\vs)-\beta(\vs) Q(\vs,\va)}
\end{equation}

Given a state $\vs$,  we can get $\lambda(\va|\vs)=h_f'(\frac{\pi}{\mu})+\alpha(\vs)+\beta(\vs) Q(\vs,\va)$ from \Eqref{diff}, then 
\begin{enumerate}
    \item[(a)] If $\lambda(\va|\vs)=h_f'(\frac{\pi}{\mu})+\alpha(\vs)+\beta(\vs) Q(\vs,\va)>0$, then $\pi(\va|\vs)$ is zero  due to  complementary slackness. Note that $\pi(\va|\vs)=0$, thus $h_f'(0)+\alpha(\vs)+\beta(\vs) Q(\vs,\va)>0$ and we can get $g_f(-\alpha(\vs)-\beta(\vs) Q(\vs,\va))<g_f(h_f'(0))=0$. 
    
    \item[(b)] If $\lambda(\va|\vs)=0$, then $h_f'(\frac{\pi}{\mu})+\alpha(\vs)+\beta(\vs) Q(\vs,\va)$ is zero and $\pi(\va|\vs)=\mu(\va\vert\vs)g_f\pbr{-\alpha(\vs)-\beta(\vs) Q(\vs,\va)}\geq0$. Note that $\pi(\va|\vs)\geq0$, thus $h_f'(0)+\alpha(\vs)+\beta(\vs) Q(\vs,\va)\leq0$ and we can get $g_f(-\alpha(\vs)-\beta(\vs) Q(\vs,\va))\geq g_f(h_f'(0))=0$.
\end{enumerate}

Through analysis (a) and (b), we can resolve optimal policy $\pi^*(\va|\vs)$ as 
\begin{equation}
    % \label{pi_star}
    \pi^\star(\va\vert\vs) = \mu(\va\vert\vs)\max\cbr{g_f\pbr{-\alpha(\vs)-\beta(\vs) Q(\vs,\va)},0}.
\end{equation}

Substituting back in \Eqref{c1} and \Eqref{c2} with \Eqref{pi_star}, we can get
\begin{equation}
    % \label{loss_1}
    \E_{\va\sim\mu}\sbr{\max\cbr{g_f(-\alpha^*(\vs)-\beta^*(\vs) Q(\vs,\va)),0}}=1,
\end{equation}
\begin{equation}
    % \label{loss_2}
    \E_{\va\sim\mu(\va|\vs)}\sbr{Q(\vs,\va)\max\cbr{g_f(-\alpha^*(\vs)-\beta^*(\vs) Q(\vs,\va)),0}-V(\vs)}=0,
\end{equation}
\end{proof}
%%%%%%%%%%%%%%%%%%%%%%%%%%%%%%%%%%%%%%%%%%%%%%%%%%%%%%%%%%%%%%%%%%%%%%%%%%%%%%%%%%%
%%%%%%%%%%%%%%%%%%%%%%%%%%%%%%%%%%%%%%%%%%%%%%%%%%%%%%%%%%%%%%%%%%%%%%%%%%%%%%%%%%%
%%%%%%%%%%%%%%%%%%%%%%%%%%%%%%%%%%%%%%%%%%%%%%%%%%%%%%%%%%%%%%%%%%%%%%%%%%%%%%%%%%%
%%%%%%%%%%%%%%%%%%%%%%%%%%%%%%%%%%%%%%%%%%%%%%%%%%%%%%%%%%%%%%%%%%%%%%%%%%%%%%%%%%%
%%%%%%%%%%%%%%%%%%%%%%%%%%%%%%%%%%%%%%%%%%%%%%%%%%%%%%%%%%%%%%%%%%%%%%%%%%%%%%%%%%%
\subsection{Proof of Theorem~\ref{soft-optimal}}
\label{theorem2}
\begin{proof}
The Lagrange function of \Eqref{align_soft} is written as follows
\begin{equation}
\label{lagrange_s}
    \begin{aligned}
        L(\pi,V,\alpha(\vs),\beta(\vs) ,\lambda)&=\E_{\vs\sim d^\pi(\vs),\va\sim\pi(\va|\vs)}\sbr{f(\frac{\pi(\va|\vs)}{\mu(\va|\vs)})+\eta\pbr{Q(\vs,\va)-V(\vs)}^2}\\
        &-\E_{\vs\sim d^\pi(\vs),\va\sim\pi(\va|\vs)}\sbr{\lambda(\va|\vs)\pi(\va|\vs)}\\
        &+\E_{\vs\sim d^\pi(\vs)}\sbr{\alpha(\vs)\pbr{\int_\va\pi(\va|\vs)d\va -1}}.
    \end{aligned}
\end{equation}

Let $h_f(x)=xf(x)$. Then for all states and actions, the KKT conditions can be written as follows
\begin{align}
    &\pi(\va|\vs)\geq0\\
    \label{c_s1}
    &\int_\va\pi(\va|\vs)d\va=1\\
    \label{c_S2}
    &\E_{\va\sim\pi(\va|\vs)}\sbr{Q(\vs,\va)-V(\vs)}=0\\
    &\lambda(\va|\vs)\geq0\\
    \label{comple_s}
    &\lambda(\va|\vs)\pi(\va|\vs)=0\\
    \label{diff_s}
    &h_f'(\frac{\pi(\va|\vs)}{\mu(\va|\vs)})+\alpha(\vs)+\eta\pbr{Q(\vs,\va)-V(\vs)}^2-\lambda(\va|\vs)=0
\end{align}

Since $h_f'$ is a strictly increasing function, its inverse function exists and is also a strictly increasing function. Let $g_f=(h_f')^{-1}(x)$ be its inverse function. 
From \Eqref{diff_s}, we can get 
\begin{equation}
    \pi(\va\vert\vs) = \mu(\va\vert\vs)g_f\pbr{\lambda(\va|\vs)-\alpha(\vs)-\eta\pbr{Q(\vs,\va)-V(\vs)}^2}
\end{equation}

Given a state $\vs$,  we can get $\lambda(\va|\vs)=h_f'(\frac{\pi}{\mu})+\alpha(\vs)+\eta\pbr{Q(\vs,\va)-V(\vs)}^2$ from \Eqref{diff_s}, then 
\begin{enumerate}
    \item[(a)] If $\lambda(\va|\vs)=h_f'(\frac{\pi}{\mu})+\alpha(\vs)+\eta\pbr{Q(\vs,\va)-V(\vs)}^2>0$, then $\pi(\va|\vs)$ is zero  due to  complementary slackness. Note that $\pi(\va|\vs)=0$, thus $h_f'(0)+\alpha(\vs)+\eta\pbr{Q(\vs,\va)-V(\vs)}^2>0$ and we can get $g_f(-\alpha(\vs)-\eta\pbr{Q(\vs,\va)-V(\vs)}^2)<g_f(h_f'(0))=0$. 
    
    \item[(b)] If $\lambda(\va|\vs)=0$, then $h_f'(\frac{\pi}{\mu})+\alpha(\vs)+\eta\pbr{Q(\vs,\va)-V(\vs)}^2$ is zero and $\pi(\va|\vs)=\mu(\va\vert\vs)g_f\pbr{-\alpha(\vs)-\eta\pbr{Q(\vs,\va)-V(\vs)}^2}\geq0$. Note that $\pi(\va|\vs)\geq0$, thus $h_f'(0)+\alpha(\vs)+\eta\pbr{Q(\vs,\va)-V(\vs)}^2\leq0$ and we can get $g_f(-\alpha(\vs)-\eta\pbr{Q(\vs,\va)-V(\vs)}^2)\geq g_f(h_f'(0))=0$.
\end{enumerate}

Through analysis (a) and (b), we can resolve optimal policy $\pi^*(\va|\vs)$ as 
\begin{equation}
    \label{pi_s_star}
    \pi^\star(\va\vert\vs) = \mu(\va\vert\vs)\max\cbr{g_f\pbr{-\alpha(\vs)-\eta\pbr{Q(\vs,\va)-V(\vs)}^2},0}.
\end{equation}

let $f(x)=\log x$, then $g_f(x)=\exp{(x-1))}>0$. Substituting back in \Eqref{pi_s_star} with $g_f(x)=\exp{(x-1))}$, we can get \Eqref{pi_soft}.
\end{proof}

%%%%%%%%%%%%%%%%%%%%%%%%%%%%%%%%%%%%%%%%%%%%%%%%%%%%%%%%%%%%%%%%%%%%%%%%%%%%%%%%%%%
%%%%%%%%%%%%%%%%%%%%%%%%%%%%%%%%%%%%%%%%%%%%%%%%%%%%%%%%%%%%%%%%%%%%%%%%%%%%%%%%%%%
%%%%%%%%%%%%%%%%%%%%%%%%%%%%%%%%%%%%%%%%%%%%%%%%%%%%%%%%%%%%%%%%%%%%%%%%%%%%%%%%%%%
%%%%%%%%%%%%%%%%%%%%%%%%%%%%%%%%%%%%%%%%%%%%%%%%%%%%%%%%%%%%%%%%%%%%%%%%%%%%%%%%%%%
%%%%%%%%%%%%%%%%%%%%%%%%%%%%%%%%%%%%%%%%%%%%%%%%%%%%%%%%%%%%%%%%%%%%%%%%%%%%%%%%%%%
\subsection{Proof of Proposition~\ref{connect}}
\label{proposition}
\begin{proof}
The proof of Proposition~\ref{connect} is based on finding a minimum for the Problem~\ref{align_soft} in a region, and then let the value of Problem~\ref{align_soft} at $\pi^*, V^*$  less than the minimum of Problem~\ref{align_soft} in this region to determine the value of $\eta$.
Let $\gU=\cbr{\pi(\va\vert\vs)\vert \pi(\va\vert\vs)\geq0,\int_\va\pi(\va\vert\vs)d\va=1}$.
Since $\inf_{x,y}g(x,y)=\inf_x\inf_yg(x,y)$ and the constraints about  $\pi$ and $V$ in Problem~\ref{align_soft} are independent, we can reformulate Problem~\ref{align_soft} as 
\begin{equation}
\label{step1}
\begin{aligned}
        &\min_{\substack{\pi,V \\ s.t. \pi\in\gU}}  \E_{\vs\sim d^\pi(\vs),\va\sim\policy}\sbr{f\pbr{\frac{\policy}{\mu(\va\vert\vs)}}+\eta\pbr{Q(\vs,\va)-V(\vs)}^2} \\
        &= \min_{\substack{\pi \\ s.t. \pi\in\gU}}\min_{V}  \E_{\vs\sim d^\pi(\vs),\va\sim\policy}\sbr{f\pbr{\frac{\policy}{\mu(\va\vert\vs)}}+\eta\pbr{Q(\vs,\va)-V(\vs)}^2}.
\end{aligned}
\end{equation}
For $V$, this is an unconstrained problem. Setting the gradient with respect to $V$ to $0$ ($\eta>0$), we obtain that
\begin{equation}
    V(\vs)=\E_{\va\sim\pi(\va|\vs)}\sbr{Q(\vs,\va)}.
\end{equation}
Substituting back in \Eqref{step1}, we can get 
\begin{equation}
\label{step2}
\begin{aligned}
        & \min_{\substack{\pi \\ s.t. \pi\in\gU}}\min_{V}  \E_{\vs\sim d^\pi(\vs),\va\sim\policy}\sbr{f\pbr{\frac{\policy}{\mu(\va\vert\vs)}}+\eta\pbr{Q(\vs,\va)-V(\vs)}^2}\\
        & = \min_{\substack{\pi \\ s.t. \pi\in\gV}}  \E_{\vs\sim d^\pi(\vs),\va\sim\policy}\sbr{f\pbr{\frac{\policy}{\mu(\va\vert\vs)}}+\eta\pbr{Q(\vs,\va)-V(\vs)}^2},\\
\end{aligned}
\end{equation}
where $\gV=\cbr{\pi(\va\vert\vs)\vert\pi(\va\vert\vs)\in\gU,V(\vs)=\E_{\va\sim\pi(\va|\vs)}\sbr{Q(\vs,\va)}}$. Note that $\gV$ is the feasible set of Problem~\ref{align_p} and the left-hand side of \Eqref{step2} is exactly Problem~\ref{align_p}.

Let $\gT=\cbr{\pi|\pi\in\gV,\pi\in\mathring{U}(\pi^*,\sigma)}$,  where $\mathring{U}(\pi^*,\sigma)=\cbr{\pi|0<|\pi-\pi^*|<\sigma,\sigma>0}$. Note that $\gT\notin\emptyset$, since $\gV$ is a convex set and $\pi^*\in\gV$. 

Assume \Eqref{step2} can achieve the minimum in $\gT$; if it cannot, it indicates a minimum at $\pi^*$ and $V^*$, and Proposition~\ref{connect} holds. We only need to adjust the value of $\eta$ to ensure that the value of \Eqref{step2} at $\pi^*, V^*$ is less than the minimum, thereby proving Proposition~\ref{connect}.

\begin{equation}
\label{step3}
\begin{aligned}
        &  k^*=\min_{\substack{\pi \\ s.t. \pi\in\gT}}  \E_{\substack{\vs\sim d^\pi(\vs)\\ \va\sim\policy}}\sbr{f\pbr{\frac{\policy}{\mu(\va\vert\vs)}}+\eta\pbr{Q(\vs,\va)-V(\vs)}^2}\\
\end{aligned}
\end{equation}

Therefore, if the value of \Eqref{step2} at $\pi^*,V^*$ is less than $k^*$, then  $\pi^*,V^*$ is a  local minimizer of Problem~\ref{align_soft}. Let $h^*=\E_{\substack{\vs\sim d^\pi(\vs)\\\va\sim\pi^*(\va|\vs)}}\sbr{\pbr{Q(\vs,\va)-V^*(\vs)}^2}$, we can get 
\begin{equation}
\label{eta}
    p^*+\eta h^* = k^*.
\end{equation}
where $p^*$ is the global solution of problem~\ref{align_p}. 
Here, for simplicity, we treat $\eta$ as a hyperparameter rather than solving for its exact value. So if $\eta$ satisfies \Eqref{eta}, we can get $\pi^*,V^*$ is a local minimizer of Problem~\ref{align_p}.

% Since $\inf \cbr{f(x)+g(x)} \geq \inf f(x)+\inf g(x)$, we can get 

% The LHS is exactly Problem~\ref{align_s}, which has the global optimal value at $\pi^*, V^*$, so that 
% \begin{equation}
% \label{step4}
% \begin{aligned}
%         & \min_{\substack{\pi \\ s.t. \pi\in\gV}}  \E_{\substack{\vs\sim d^\pi(\vs)\\ \va\sim\policy}}\sbr{f\pbr{\frac{\policy}{\mu(\va\vert\vs)}}}+\min_{\substack{\pi \\ s.t. \pi\in\gV}}\E_{\substack{\vs\sim d^\pi(\vs)\\ \va\sim\policy}}\sbr{\eta\pbr{Q(\vs,\va)-V(\vs)}^2}\\
%         &\geq p^* +\min_{\substack{\pi \\ s.t. \pi\in\gV}}\E_{\substack{\vs\sim d^\pi(\vs)\\ \va\sim\policy}}\sbr{\eta\pbr{Q(\vs,\va)-V(\vs)}^2},
% \end{aligned}
% \end{equation}
%  Suppose that we have 

% Let $k^*\eta=\inf\cbr{\eta\E_{\substack{\vs\sim d^\pi(\vs)\\ \va\sim\policy}}\sbr{\pbr{Q(\vs,\va)-V(\vs)}^2}\vert\pi\in\gT}$, we can get $p^*+\eta k^*$ is the minimum  for $\pi\in\gT,\forall V$.

\end{proof}

%%%%%%%%%%%%%%%%%%%%%%%%%%%%%%%%%%%%%%%%%%%%%%%%%%%%%%%%%%%%%%%%%%%%%%%%%%%%%%%%%%%
%%%%%%%%%%%%%%%%%%%%%%%%%%%%%%%%%%%%%%%%%%%%%%%%%%%%%%%%%%%%%%%%%%%%%%%%%%%%%%%%%%%
%%%%%%%%%%%%%%%%%%%%%%%%%%%%%%%%%%%%%%%%%%%%%%%%%%%%%%%%%%%%%%%%%%%%%%%%%%%%%%%%%%%
%%%%%%%%%%%%%%%%%%%%%%%%%%%%%%%%%%%%%%%%%%%%%%%%%%%%%%%%%%%%%%%%%%%%%%%%%%%%%%%%%%%
%%%%%%%%%%%%%%%%%%%%%%%%%%%%%%%%%%%%%%%%%%%%%%%%%%%%%%%%%%%%%%%%%%%%%%%%%%%%%%%%%%%

\subsection{Extra Theoretical Analysis}
In this section, we provide additional theoretical analysis on the time complexity of AlignIQL and AlignIQL-hard, as well as the suboptimality gap between the IPF and IPF-Soft formulations.

\textbf{Suboptimality Gap.} We compute the KL divergence between the solutions of IPF (i.e., AlignIQL-hard) and IPF-Soft (i.e., AlignIQL) to investigate the suboptimality introduced by our relaxation. Assume that $Q(\vs, \va)$ and $V(\vs)$ are given, and let $\pi^*(\va|\vs)$ and $\hat{\pi}^*(\va|\vs)$ denote the optimal solutions of IPF and IPF-Soft as derived in Theorems 5.5 and 5.1, respectively.

For the regularization function $f(x) = \log x$, we obtain

\begin{equation}
    \begin{aligned}
        &D_{KL}\pbr{\hat{\pi}^{*}(\va|\vs)|\pi^{*}(\va|\vs)}\\
        &=\int \hat{\pi}^{*}(\va|\vs)\log\frac{k(\vs)\exp{\pbr{-\eta(Q-V)^2}}}{\exp{\pbr{-\beta Q}}}d\va, \\
    \end{aligned}
    \label{suboptimal_gap1}
\end{equation}
where $k(\vs)$ is the normalized ratio to keep $\pi^*(\va|\vs)$ and  $\hat{\pi}^*(\va|\vs)$ are distributions. 
\begin{equation}
    \begin{aligned}
        &=\int \hat{\pi}^{*}(\va|\vs)\log\frac{k(\vs)\exp{\pbr{-\eta(Q-V)^2}}}{\exp{\pbr{-\beta Q}}}d\va \\
        &= \int \hat{\pi}^{*}(\va|\vs)\pbr{\log{k}-\eta(Q-V)^2+\beta Q}d\va \\
        &= log{k} - \underbrace{\eta\int \hat{\pi}^{*}(\va|\vs)(Q-V)^2 d \va}_K +\underbrace{\beta\int \hat{\pi}^{*}(\va|\vs)Q d \va}_M \\
    \end{aligned}
    \label{suboptimal_gap2}
\end{equation}
Suppose that 
\begin{equation}
     V(\vs)=\E_{\va\sim\hat{\pi}^{*}(\va|\vs)}\sbr{Q(\vs,\va)},
\end{equation}
which is one of the optimal conditions to IPF-soft according to A.2, then if we treat $Q(\vs, \va)$ as a function of the random variable $\va$ drawn from the policy $\hat{\pi}^*(\va|\vs)$, the gap between the solutions of IPF and IPF-Soft is influenced by the variance of $Q(\vs, \va)$ ($K$), as well as by $M$, $\eta$, and the Lagrange multiplier $\beta$. Since the KL divergence is non-negative, reducing the suboptimality gap requires minimizing the negative terms $-K$ and $M$ (when $\beta M < 0$). Given $\eta > 0$, a higher variance of $Q$ leads to a smaller gap. This implies that a larger $\sup |Q(\vs, \va)|$ helps reduce the suboptimality, while $\beta$ serves as a regularizer that penalizes over- or underestimation of $Q$. This highlights the importance of accurately estimating $Q$ to control the suboptimality gap. Furthermore, according to \Eqref{suboptimal_gap2}, the gap can be made small by tuning $\eta$, and importantly, $\eta$ need not go to infinity, which is consistent with our Proposition 5.6.

\textbf{Time Complexity.} In this part, we analyze the time complexity of AlignIQL and AlignIQL-hard. Since IPF is a convex problem, if the KKT conditions (Equations $30$-$35$) can be solved exactly, a closed-form solution for AlignIQL-hard is attainable. However, this solution requires access to the Lagrange multipliers, which we approximate using a neural network in our implementation (Lemma 5.2). As shown in Table~\ref{tab:hy1}, we use Adam to optimize this multiplier network. Thanks to the decoupled training in IQL, where the policy and critic networks are trained independently, the extra time complexity of D-AlignIQL-hard arises solely from optimizing the multiplier network. Assuming we aim to approximate the optimal IPF policy $\pi^*$ with $\pi_\phi$, for the regularizer $f(x) = \log x$, the KL divergence between $\pi_\phi$ and the optimal policy $\pi^*$ can be bounded by

\begin{equation}
    \begin{aligned}
        &D_{KL}\pbr{\pi^{*}(\va|\vs)|\pi_\phi(\va|\vs)}\\
        &=\int \pi^{*}(\va|\vs)\log\frac{\exp{\pbr{-\alpha(\vs)-\beta(\vs)Q -1}}}{\exp{\pbr{-\alpha_\phi(\vs)-\beta_\phi(\vs) Q-1}}}d\va, \\
        &= \int \pi^{*}(\va|\vs)\pbr{\alpha_\phi(\vs)-\alpha(\vs)+Q(\beta_\phi(\vs) -\beta(\vs))}d\va \\
        &\leq \int \pi^{*}(\va|\vs)|\alpha_\phi(\vs)-\alpha(\vs)+Q(\beta_\phi(\vs) -\beta(\vs))|d\va \\
        &\leq \int \pi^{*}(\va|\vs)\pbr{|\alpha_\phi(\vs)-\alpha(\vs)|+|Q(\beta_\phi(\vs) -\beta(\vs))|}d\va \\
    \end{aligned}
    \label{time_complex1}
\end{equation}

According to Lemma 5.2, the true Lagrange multipliers correspond to the stationary points of Eq.\eqref{loss_multiplier}, $i.e$, points where the gradient vanishes. This implies that the time complexity of obtaining an $\epsilon$-suboptimal solution depends on the convergence rate of the optimizer to a (local) optimum. In our implementation, we use the Adam optimizer with default parameters. As shown in \cite{defossez2020simple}, under certain assumptions, Adam achieves a convergence rate of $\mathcal{O}(d\ln{N}/\sqrt{N})$, where $d$ is the dimensionality and $N$ is the total number of iterations. Therefore, based on \Eqref{time_complex1}, the additional time complexity introduced by the multiplier network in AlignIQL-hard is also approximately $\mathcal{O}(d\ln{N}/\sqrt{N})$.

\begin{table}[htbp]
\caption{Runtime of different diffusion-based offline RL methods. (Average)}
\centering
\small
\scalebox{0.9}{
\begin{tabular}{lcccccccc}
\toprule
\multicolumn{1}{c}{\bf D4RL Tasks}  & \multicolumn{1}{c}{\bf D-AlignIQL (ours) (T=5) }
& \multicolumn{1}{c}{\bf D-AlignIQL-hard (ours)}
& \multicolumn{1}{c}{\bf SfBC (T=5)} & \multicolumn{1}{c}{\bf IDQL (T=5)}   \\
\midrule

\multicolumn{1}{c}{\bf Umaze Runtime ($1$ epoch)} &      $4.2$s       &  $4.6$s &    $4.3$s    &  $4.5$s \\

\bottomrule
\end{tabular}
}
\label{table:time2}
\end{table}

For AlignIQL, we obtain a closed-form solution of IPF-Soft, i.e., AlignIQL itself. This closed-form solution does not require explicit computation of multipliers, and thus incurs no additional cost for solving IPF-Soft. This is the core motivation for relaxing the IPF constraints.

In Appendix~\ref{runtime}, we report empirical runtime evaluations of D-AlignIQL. Building on this, we additionally test the runtime of D-AlignIQL-hard in the AntMaze-umaze environment to assess its practical time complexity. Table~\ref{table:time2} shows that although AlignIQL-hard introduces an additional multiplier network, it still matches the runtime of other methods.

\section{Pseudocode}
\label{pesudocode}
\begin{figure}[t]
\centering
\begin{minipage}[t]{0.48\textwidth}
\centering
\begin{algorithm}[H]
\small
    \caption{AlignIQL Training}
    \begin{algorithmic}[1]
    \label{alg:training}
    \STATE Initialize behavior policy network $\mu_{\phi}$, critic networks $Q_{\theta}$,$V_\psi$, and target networks  $Q_{\hat{\theta}}$, multiplier networks $\alpha_\omega(\vs),\beta_\chi(\vs)$\! %$\mu_{\hat{\phi}}$,
    \FOR{$t=1$ to $T$}
    \STATE Sample from $\gB\!=\!\cbr{(\vs_t, \va_t, r_t, \vs_{t+1})}\!\sim\!\gD$.
    \STATE \textit{\bfseries \# Critic updating}
    \STATE $\psi \leftarrow \psi - \lambda \nabla_{\psi} \mathcal{L}_V(\psi)$ (\Eqref{eqn:fit_v_expectiles})
    \STATE $\theta \leftarrow \theta - \lambda \nabla_{\theta} \mathcal{L}_Q(\theta)$ (Equation~\ref{eqn:fit_q})
    \IF{AlignIQL-hard:}
    \STATE \textit{\bfseries \# Multiplier network updating}
    \STATE $\omega \leftarrow \omega+\lambda \nabla_{\omega} \mathcal{L}_M(\omega)$ 
    \STATE $\chi \leftarrow \chi+ \lambda \nabla_{\chi} \mathcal{L}_M(\chi)$
    \ENDIF
    \STATE $\phi \leftarrow \phi - \lambda \nabla_{\phi} \mathcal{L}_{\mu}(\phi)$(\Eqref{diffusion_loss})
    \STATE \textit{\bfseries \# Target Networks updating}
    \STATE $\hat \theta \leftarrow (1 - \eta) \hat \theta + \eta \theta$   
    \ENDFOR
    \end{algorithmic}
\end{algorithm}
\end{minipage}
\centering
\begin{minipage}[t]{0.48\textwidth}
\centering
\begin{algorithm}[H]
\small
    \caption{AlignIQL Policy Extraction}
    
    \begin{algorithmic}[1]
    \label{alg:extraction}
    \STATE \textbf{Pretraining:} $Q_{\hat{\theta}}$,$V_\psi$,$\mu_{\phi}$,multiplier networks $\alpha_\omega(\vs),\beta_\chi(\vs)$
    \STATE Samples per state $N$, $\eta$
    \WHILE{\text{not done}}
    \STATE Get current state $\vs$
    \STATE Sample $a_i \sim \mu_{\phi}(\va|\vs)$, $i = 1, \ldots, N$
    \IF{AlignIQL-hard:}
    \STATE Compute weight $w(\vs,\va)$ through \Eqref{pi_star}
    \ELSE
    \STATE Compute weight $w(\vs,\va)$ through \Eqref{pi_soft}
    \ENDIF
    \STATE Normalize:  $p_i = \frac{w(s, a_i)}{\sum_j w(s, a_j)}$
    \STATE Select $a_{\text{taken}}$ with the highest probability according to  $p_i$
    \ENDWHILE
    \end{algorithmic}
\end{algorithm}
\end{minipage}
\end{figure}

\begin{algorithm}[H]
            \caption{IQL using AlignIQL or AWR}
            \begin{algorithmic}
            \label{alg:iql}
                \STATE Initialize parameters $\psi$, $\theta$, $\hat{\theta}$, $\phi$.
                \STATE TD learning (IQL):
                \FOR{each gradient step}
                \STATE $\psi \leftarrow \psi - \lambda_V \nabla_\psi L_V(\psi)$ 
                \STATE $\theta \leftarrow \theta - \lambda_Q \nabla_{\theta} L_Q(\theta)$
                \STATE $\hat{\theta} \leftarrow (1-\alpha)\hat{\theta} + \alpha\theta$
                \ENDFOR
                \STATE  \# Policy extraction (AWR or AlignIQL): 
                \FOR{each gradient step}
                \IF{AntMaze}
                \STATE \# Update policy with \Eqref{pi_soft}$+\kappa$\Eqref{AWR} \# AntMaze \footnotemark[1]
                \ELSE
                \STATE  Update policy with \Eqref{pi_soft} \# MuJoCo
                \ENDIF
                \ENDFOR
            \end{algorithmic}
\end{algorithm}

\footnotetext[1]{For AntMaze tasks, as discussed in Section~\ref{soft_solve}, achieving effective policy alignment requires strong policy expressivity. Due to the limited expressivity of Gaussian policies, Gaussian-based AlignIQL tends to overfit during early training. To mitigate this, we combine the weights of AWR and AlignIQL to enhance the multi-modality of extracted policy, i.e., $w(\vs,\va) = w_\text{AlignIQL} + \kappa w_\text{AWR}$. Moreover, the superior performance of diffusion-based AlignIQL using \Eqref{pi_soft} without any modification, compared to other diffusion-based baselines, further supports our hypothesis. An ablation study on $\kappa$ is provided in Appendix~\ref{extra_ablation}.}

The pseudocode for AlignIQL and AlignIQL-hard is provided in Algorithm~\ref{alg:training} and Algorithm~\ref{alg:extraction}, respectively. Note that AlignIQL shares the same training procedure as IQL; to implement AlignIQL, one only needs to replace the weight in IQL with the weight used in AlignIQL, as illustrated in Algorithm~\ref{alg:iql}. In fact, reimplementing AlignIQL from IQL is straightforward—only a single line corresponding to the policy extraction step needs to be modified, as shown below.

\clearpage
\newpage
\begin{python}
    def compute_actor_loss(
        self, batch: TorchMiniBatch, action: None
    ):
        # compute weight
        with torch.no_grad():
            v = self._modules.value_func(batch.observations)
            min_Q = self._targ_q_func_forwarder.compute_target(
                batch.observations, reduction="min"
            ).gather(1, batch.actions.long())
        # Weights for AlignIQL used in extracting the IQL policy    
        exp_a = torch.exp(((min_Q - v)**2) * self.eta).clamp(
            max=self._max_weight
        )
        # Weights for AWR used in extracting the IQL policy
        # exp_a = torch.exp((min_Q - v) * self._weight_temp).clamp(
        #    max=self._max_weight
        #)
        # compute log probability
        dist = self._modules.policy(batch.observations)
        log_probs = dist.log_prob(batch.actions.squeeze(-1)).unsqueeze(1)

        return ActorLoss(-(exp_a * log_probs).mean())
\end{python}

\section{Implementation Details and Additional Experiments}

\subsection{Implementation Details}
\label{details}
In this section, we introduce the implementation details for reproducing our results and some extra experiments to validate our method.

\textbf{Evaluation} Throughout this paper, unless otherwise specified, we report the final evaluation results averaged over different random seeds as our reported score.

\textbf{Gaussian-based implementation} Our Gaussian-based implementation is built upon CORL~\citep{tarasov2022corl}, an offline reinforcement learning library that offers high-quality, single-file implementations of state-of-the-art offline RL algorithms. Following AWR, we clip the weight in AlignIQL using $\max\cbr{0.01,\text{weight}}$.

\textbf{Diffusion-based implementation} Our Diffusion-based implementation is based on IDQL~\citep{hansen-estruch2023} and the jaxrl repo, which uses the JAX framework to implement RL algorithms. All networks are optimized through the Adam~\citep{kingma2014adam}. For D-AlignIQL-hard, we clip the multiplier network gradient to prevent gradient explosion due to the exponential term.  We use quantile loss and \Eqref{eq:Expectile_Weights} for IDQL since the expectile objective is used in IQL. For networks, we follow the default networks and parameters used by IDQL. The policy network uses an LN\_Resnet architecture~\cite{hansen-estruch2023} (Appendix G) with hidden size $256$ and $n=3$. The critic and value networks are 2-layer MLPs with a hidden size of $256$ and ReLU activation functions.  Following IDQL, we use normalization to adjust the rewards, which means $r=r/(r_\text{max}-r_\text{min})$. For AntMaze tasks, $r=r-1$. We also follow the IDQL's advice to take the maximum probability action at evaluation time. We train for $300000$ epochs for AntMaze tasks with batch size $512$ and $100000$ epochs for MuJoCo tasks with batch size $256$, consistent with IDQL and CORL.

\textbf{Data Corruption Details.} Following \citet{yang2023towards}, we apply random corruption to the four components: states, actions, rewards, and dynamics (i.e., next states). The overall corruption level is governed by four parameters: $c$,$\epsilon$, where $c$ denotes the corruption rate within an offline dataset of size $N$, while $\epsilon$ denotes the corruption scale for each dimension. Unless otherwise specified, we set $c=0.5$, and $\epsilon=0.5$. Below, we describe four types of random data corruption.
\begin{itemize}
     \item  \textbf{Random observation attack}: We randomly sample $c \cdot N$ transitions $(\vs, \va, r, \vs')$ and corrupt the states as $\hat \vs = \vs + \lambda \cdot \text{std}(\vs)$, where $\lambda \sim \text{Uniform}[-\epsilon, \epsilon]^{d_\vs}$. Here, $d_\vs$ denotes the dimensionality of the state, and $\text{std}(\vs)$ is the $d_\vs$-dimensional standard deviation of all states in the offline dataset. The noise is independently added to each dimension and scaled by the corresponding standard deviation. 
      \item  \textbf{Random action attack}: We randomly select $c \cdot N$ transitions $(\vs, \va, r, \vs')$ and corrupt the action as $\hat \va = \va + \lambda \cdot \text{std}(\va)$, where $\lambda \sim \text{Uniform}[-\epsilon, \epsilon]^{d_\va}$. Here, $d_\va$ denotes the dimensionality of the action, and $\text{std}(\va)$ is the $d_\va$-dimensional standard deviation of all actions in the offline dataset. 
    \item \textbf{Random reward attack}: We randomly sample $c \cdot N$ transitions $(\vs, \va, r, \vs')$ from $D$ and corrupt the reward as $\hat r \sim \text{Uniform}[-30 \cdot \epsilon, 30 \cdot \epsilon]$. 
    
    \item  \textbf{Random dynamics attack}: We randomly sample $c \cdot N$ transitions $(\vs, \va, r, \vs')$ and corrupt the next state as $\hat \vs' = \vs' + \lambda \cdot \text{std}(\vs')$, where $\lambda \sim \text{Uniform}[-\epsilon, \epsilon]^{d_\vs}$. Here, $d_\vs$ denotes the dimensionality of the state, and $\text{std}(\vs')$ is the $d_\vs$-dimensional standard deviation of all next states in the offline dataset.
        \item \textbf{Random mixed attack}: We randomly sample $c \cdot N$ transitions to conduct the random observation attack, followed by another $c \cdot N$ transitions for the random action attack. The same procedure is applied to the reward and dynamics attacks.
\end{itemize}

\textbf{Noise Data and Vision-based Experiment:} For noise data tasks, we train for $2e6$ steps on the D4RL halfcheetah-medium-replay-v2 robust tasks with $\epsilon=c=0.5$. Note that we use Gaussian-based AlignIQL (Algorithm~\ref{alg:iql}) in robust experiments  and image-based control, which means employing a Gaussian-based policy instead of the diffusion model.  We reimplement our method based on the official code from \citet{yang2023towards}. As shown in Appendix~\ref{pesudocode}, implementing our code based on IQL is very straightforward, requiring only changes to the policy extraction step. We use $\beta=\eta=3$ for both IQL+AWR (Abbreviated as IQL) and AlignIQL. For $\tau$, we adopt the default value $\tau=0.7$ provided in the official code from \citet{yang2023towards}. 

For vision-based experiments, we implement the discrete version of AlignIQL (Discrete AlignIQL) based on the discrete IQL (D-IQL) from d3rlpy. As shown in Appendix F.2, there is no price to implement AlignIQL based on IQL. For discrete IQL+AWR (Abbreviated as IQL), we report the average score of the last $3$ evaluations by selecting the minimal standard deviation from $\tau \in \sbr{0.5,0.7,0.9}$ in the last $3$ evaluations. Similarly, for Discrete AlignIQL with $\eta=1$, we report the average score of the last $3$ evaluations by selecting the minimal standard deviation from $\tau \in \sbr{0.5,0.7,0.9}$ in the last $3$ evaluations. We do this because vision-based methods are unstable, and their performance may fluctuate significantly across different seeds or training steps. 

% \textbf{Sparse Rewards Tasks:} For D-AlignIQL, we use $\eta=1>0$ and $\tau=0.7$ and keep other hyperparameters the same as Table~\ref{tab:hy1}. For D-AlignIQL-A, we sweep over $N\in\cbr{256,512,1024}$, and $\tau\in \cbr{0.7,0.9}$. For IDQL, we sweep over $\tau\in \cbr{0.7,0.9}$. The weight of IDQL is $w_2^{\tau}(s, a) = |\tau - \mathbbm{1}(Q(s,a) < V_\tau^2(s))|.$ We report the scores of Table~\ref{tab:sparse_results} by choosing the best score from different $N$. 

\textbf{Experimental details on different regularizers}: In this part, we aim to validate the effect of different regularizers. We experimented with the case of $f(x)=x-1$ in D-AlignIQL-hard and D-AlignIQL. Let $f(x)=x-1$, we can get $g_f(x)=\frac{1}{2}x+\frac{1}{2}$. Substituting back in \Eqref{pi_s_star} and \Eqref{pi_star} with $g_f(x)=\frac{1}{2}x+\frac{1}{2}$, we can get 
\begin{equation}
    \label{linear-soft}
        \text{AlignIQL:} \quad \pi^\star(\va\vert\vs) = \mu(\va\vert\vs)\max\cbr{\frac{1}{2}\pbr{-\alpha(\vs)-\eta\pbr{Q(\vs,\va)-V(\vs)}^2}+\frac{1}{2},0},
\end{equation}

\begin{equation}
    \label{linear-hard}
        \text{AlignIQL-hard:} \quad\pi^\star(\va\vert\vs) = \mu(\va\vert\vs)\max\cbr{\frac{1}{2}\pbr{-\alpha(\vs)-\beta(\vs)Q(\vs,\va)}+\frac{1}{2},0}.
\end{equation}
    We conducted experiments on Antmaze-umaze to evaluate the effects of different regularizers. We keep all other hyperparameters the same as Table~\ref{tab:hy1}. The experimental details are described as follows.

\textbf{D-AlignIQL:} In \Eqref{linear-soft}, $\alpha(\vs)$ serves as a normalization term, which does not affect the action evaluation when $\frac{1}{2}\pbr{-\alpha(\vs)-\eta\pbr{Q(\vs,\va)-V(\vs)}^2}+\frac{1}{2}\geq0$. To simply the training process, we assume $\frac{1}{2}\pbr{-\alpha(\vs)-\eta\pbr{Q(\vs,\va)-V(\vs)}^2}+\frac{1}{2}\geq0$ and ignore $\alpha(\vs)$. Since we use the Diffusion-based implementation and select the action with maximum weight, such simplification is reasonable and avoids training an extra multiplier network. We set $\eta=1$ in the Antmaze umaze experiment.

\textbf{AlignIQL-hard:} Similar to Lemma~\ref{loss_lemma}, we can train our multiplier through the following loss function (we replace $\frac{1}{2}\pbr{-\alpha(\vs)-\beta(\vs)Q(\vs,\va)}+\frac{1}{2}$ with $w_{\text{linear}}$ for simplicity)
     \begin{equation}
         \label{loss_linear_multiplier}
         \min_{\alpha, \beta} \gL_M=\E_{\va\sim\mu}\sbr{\mathbbm{1}\pbr{w_{\text{linear}}>0}w_{\text{linear}}^2}+\alpha(\vs)+\beta (\vs) V(\vs),
     \end{equation}
\begin{proof}
    This proof can be obtained by setting the gradient of \Eqref{loss_linear_multiplier} to $0$ with respect to $\alpha,\beta$.
\end{proof}

\subsection{Additional Experiments}
\label{add_exps}
% \paragraph{Results of Mix attacked} In this section, we employ random mix attack into the MuJoCo ``medium-replay'' datasets to further valid the motivation of policy alignment. Control corruption range $\epsilon=1$, we first randomly add noise into 10\% observations and then add noise into the 10\% rewards. The details of corruption are provided in Appendix~\ref{details}. We test the performance of different policy extraction methods from the 

% \begin{figure}
%     \centering
%     \includegraphics[width=\linewidth]{figures/environments_comparison_neurips.pdf}
%     \caption{Caption}
%     \label{fig:enter-label}
% \end{figure}

\paragraph{Results of AlignIQL-hard}

In this section, we report the results of D-AlignIQL-hard. Table~\ref{tab:hy1} reports the hyperparameters we used for AlignIQL-hard. According to our analysis, when $\beta < 0$, AlignIQL-hard is equivalent to using AWR for policy extraction. We only report the AntMaze results because the $\beta$ learned in MuJoCo tasks is essentially negative.
\begin{table*}[!htbp]
\caption{Average Results of D-AlignIQL-hard on AntMaze tasks.}
\label{table:align}
\centering
% \small
\begin{tabular}{l|ccc|ccc}
\hline
  & \multicolumn{3}{c|}{\bf D-AlignIQL-hard} & \multicolumn{3}{c}{\bf D-AlignIQL}\\ \hline
\multicolumn{1}{c|}{D4RL Tasks} & \multicolumn{1}{c}{$N=16$} & \multicolumn{1}{c}{$N=64$} & \multicolumn{1}{c|}{$N=256$} & \multicolumn{1}{c}{$N=16$} & \multicolumn{1}{c}{$N=64$} & \multicolumn{1}{c}{$N=256$} \\ \hline
% \midrule
% \multicolumn{1}{c|}{\bf Locomotion}  &      $69.7$       &  $71.0$   &  $71.5$ &  $73.2$ &  $74.0$ &  $72.3$    \\ \hline

% \midrule
\multicolumn{1}{c|}{\bf AntMaze} &      $54.2$       &  $57.9$ &    $56.7$ &  $65.8$ &  $70.2$ &  $70.7$    \\ \hline
% \bottomrule
\end{tabular}

\end{table*}

We also report the performance of D-AlignIQL under different $N$. Therefore, the results in Table~\ref{table:align} are slightly lower than those in Table~\ref{tbl:rl_results}. For D-AlignIQL and D-AlignIQL-hard,  we perform minimal hyperparameter tuning. In most cases, we use the default parameters of IDQL. Therefore, the performance of our algorithm can be further improved with additional tuning.

\paragraph{Running time}
\label{runtime}

The biggest problem of the diffusion-based method is the long inference time, which comes from the iterative running of the Markov chain. In this part, we present the running time of D-AlignIQL compared to other methods. We tested the runtime of DiffCPS on an RTX 3050 GPU on D4RL tasks. ($3000$ epochs (3e6 gradient steps)) From Table~\ref{table:time}, it's evident that the runtime of D-AlignIQL is comparable to other diffusion-based methods. 

\begin{table}[!htbp]
% \vspace{.1cm}
% \vspace{0.1in}
\caption{Runtime of different diffusion-based offline RL methods. (Average)}
\centering
\small
\scalebox{0.9}{
\begin{tabular}{lcccccccc}
\toprule
\multicolumn{1}{c}{\bf D4RL Tasks}  & \multicolumn{1}{c}{\bf D-AlignIQL (ours) (T=5) }
& \multicolumn{1}{c}{\bf DiffusionQL (T=5) }
& \multicolumn{1}{c}{\bf SfBC (T=5)} & \multicolumn{1}{c}{\bf IDQL (T=5)}   \\
\midrule
\multicolumn{1}{c}{\bf Locomotion Runtime ($1$ epoch)}  &      $9.12$s       &  $5.1$s   &  $8.4$s   &  $9.5$s  \\

\midrule
\multicolumn{1}{c}{\bf AntMaze Runtime ($1$ epoch)} &      $9.76$s       &  $10.5$s &    $10.5$s    &  $10.5$s \\

\bottomrule
\end{tabular}
}
\label{table:time}
% \vspace{-0.2in}
\end{table}
Although the runtime of D-AlignIQL is comparable to other diffusion-based methods, 
AlignIQL is still slower than the Gaussian-based policy (about $1.2$s for one epoch). The slow inference speed can harm the performance in real-time robot control tasks. Fortunately, this problem can be solved by recent sample acceleration methods, like SiD~\citep{zhou2024score,zhou2024long} or EDP~\citep{kang_efficient_nodate}. EDP directly constructs actions from corrupted ones at training to avoid running the sampling chain. In this way, EDP only needs to run the noise-prediction network once, which can substantially reduce the training time. Below, we first shortly introduce EDP

\textbf{EDP:} \citet{kang_efficient_nodate} noticed that the noisy sample of diffusion model can be written as $q(\mathbf{x}^t \vert \mathbf{x}^0) = \mathcal{N}(\mathbf{x}^t; \sqrt{\bar{\alpha}_t} \mathbf{x}^0, (1 - \bar{\alpha}_t)\mathbf{I})$.

Using the parametrization trick, we can get 
\begin{equation}
    \vx^t = \sqrt{\bar{\alpha}_t}\vx^0+\sqrt{1 - \bar{\alpha}_t}\epsilon, \quad \epsilon\in\gN(\mathbf{0},\mathbf{I})
\end{equation}

Replacing $\epsilon$ with our denoising network $\epsilon_\phi(\vx_i,i,\vs)$, we can obtain the action by running the noise-prediction once:
\begin{equation}
    \vx^0 = \frac{1}{\sqrt{\bar{\alpha}_t}}\vx^t-\frac{\sqrt{1 - \bar{\alpha}_t}}{\sqrt{\bar{\alpha}_t}}\epsilon_\phi(\vx_i,i,\vs)
\end{equation}
Although EDP is a simple method, it can greatly reduce the training time of diffusion-based offline RL methods while keeping competitive results. EDP can also enjoy the benefits of other diffusion acceleration methods, like DPM-solver~\cite{lu2022dpmsolverfastodesolver}.

We use the EDP's official IQL code to reimplement our method. 
   Table~\ref{table:distill} shows the results of EDP-based AlignIQL.
\begin{table}[ht]
    \centering
        \caption{Performance and runtime time ($1$ epoch) of D-AlignIQL (Diffusion steps $T=5$) and EDP-based D-AlignIQL.}

\begin{tabular}{c|c|c|c|c}
\hline
\multirow{2}{*}{Method}&
\multicolumn{2}{c|}{Performance} &
\multicolumn{2}{c}{Runtime (s)} \\
\cline{2-5}
  &Large-p &Large-d &Large-p &Large-d  \\
\hline
D-AlignIQL        &  $65.2$   &  $66.4$        & $9.5$ &   $9.78$  \\
\hline
EDP-based D-AlignIQL  & $43$ &    $62$  &  $2.22$ &  $1.95$\\
\hline
\end{tabular}
    \label{table:distill}
\end{table}

The above results are based on a single random seed, as our primary focus is on runtime efficiency. As shown in Table~\ref{table:distill}, a simple EDP-based AlignIQL implementation can reduce training time by up to 80\% while maintaining comparable performance to the original diffusion-based policy. Notably, our implementation does not utilize the DPM-solver, which, according to the original EDP paper, can further accelerate training by a factor of 2.3. In summary, diffusion-based policies with sample acceleration can achieve training speeds comparable to those of Gaussian policies (approximately 1.2 seconds per epoch).

\paragraph{Full D4RL results and Training Curves}  \label{full_res}
As shown in Table~\ref{tbl:rl_results} and Table~\ref{tbl:all_results}, the performance of both Gaussian-based AlignIQL and D-AlignIQL is inferior to IQL+AWR and other methods. As discussed in Section~\ref{aligniql-hard}, when $\beta(\vs) < 0$, AWR inherently achieves policy alignment. In MuJoCo tasks, $\beta$ is generally negative, so AWR alone suffices to achieve policy alignment, rendering AlignIQL unnecessary, as it provides a relaxed solution to policy alignment. However, for more challenging tasks such as AntMaze, $\beta$ is not always negative; in such cases, AlignIQL enables policy extraction without computing the implicit multiplier and approximately achieves policy alignment. Moreover, as demonstrated in our experiments, AlignIQL also improves the robustness of the extracted policy.

\begin{table*}[htbp]
\caption{The full results of Gaussian-based AlignIQL over $10$ random seeds. We use the Gaussian-based implementation of AlignIQL. The top $3$ results are highlighted in bold. The baseline results are taken from their original papers.}
\label{tbl:rl_results}
\centering
\small
\resizebox{1.0\textwidth}{!}{%
\begin{tabular}{llccccccccc}
\toprule
\multicolumn{1}{c}{\bf Dataset} & \multicolumn{1}{c}{\bf Environment} & \multicolumn{1}{c}{\bf CQL} & \multicolumn{1}{c}{\bf Diffusion-QL} & \multicolumn{1}{c}{\bf SfBC} & \multicolumn{1}{c}{\bf SQL} & \multicolumn{1}{c}{\bf DD} & \multicolumn{1}{c}{\bf Diffuser} & \multicolumn{1}{c}{\bf IDQL-A} & \multicolumn{1}{c}{\bf IQL} & \multicolumn{1}{c}{\bf AlignIQL (ours)}\\
\midrule
Medium-Expert & HalfCheetah    & $62.4$               &  \textbf{$\textbf{96.8}$} & $92.6$  & $\textbf{94.0}$   & $90.6$     & $79.8$       &  $\textbf{95.9}$ & $86.7$ & $81.9$\scriptsize{±1.50} \\
Medium-Expert & Hopper         & $98.7$               &  $\textbf{111.1}$    &  $108.6$ & \textbf{$\textbf{111.8}$}  & \textbf{$\textbf{111.8}$} & $107.2$ & $108.6$ & $91.5$ & $75.2$\scriptsize{±5.9} \\
Medium-Expert & Walker2d       & $\textbf{111.0}$              & $\textbf{110.1}$ & $109.8$ & $110.0$ & $108.8$ & $108.4$ & \textbf{$\textbf{112.7}$} & $109.6$ & $104.4$\scriptsize{±9.5} \\
\midrule
Medium        & HalfCheetah    & $44.4$               & \textbf{$\textbf{51.1}$}  & $45.9$  & $48.3$  & $\textbf{49.1}$ & $44.2$ & $\textbf{51.0}$ & $47.4$ & $44.2$\scriptsize{±0.3} \\
Medium        & Hopper         & $58.0$               & \textbf{$\textbf{90.5}$}  & $57.1$ & $\textbf{75.5}$  & $\textbf{79.3}$ & $58.5$ & $65.4$ & $66.3$ & $57.8$\scriptsize{±2.4} \\
Medium        & Walker2d       & $79.2$               & \textbf{$\textbf{87.0}$}  & $77.9$  & $\textbf{84.2}$  & $\textbf{82.5}$ & $79.7$ & $82.5$ & $78.3$ & $76.7$\scriptsize{±3.4} \\
\midrule
Medium-Replay & HalfCheetah    & $\textbf{46.2}$               & \textbf{$\textbf{47.8}$}  & $37.1$  & $44.8$  & $39.3$ & $42.2$ & $\textbf{45.9}$ & $44.2$ & $37.3$\scriptsize{±0.2} \\
Medium-Replay & Hopper         & $48.6$               & \textbf{$\textbf{101.3}$} & $86.2$  & $\textbf{99.7}$  & $\textbf{100.0}$ & $96.8$ & $92.1$ & $94.7$ & $77.9$\scriptsize{±8.9} \\
Medium-Replay & Walker2d       & $26.7$               & \textbf{$\textbf{95.5}$} & $65.1$  & $\textbf{81.2}$  & $75.0$ & $61.2$ & $\textbf{85.1}$ & $73.9$ & $66.3$\scriptsize{±9.1} \\
\midrule
\multicolumn{2}{c}{\bf Average (Locomotion)} & $63.9$ & \textbf{$\textbf{87.9}$} & $75.6$ & $83.3$  & $81.8$ & $75.3$ & $82.1$ & $76.9$ & $68.1$ \\

    \midrule
    Default       & AntMaze-umaze  & $74.0$               & $\textbf{93.4}$ & $92.0$  & $92.2$  & -       & -      & $\textbf{94.0}$ & $87.5$ & $\textbf{95.6}$\scriptsize{±2.2} \\
    Diverse       & AntMaze-umaze  & $\textbf{84.0}$          & $66.2$  & $\textbf{85.3}$  & $74.0$ & -       & -      & $80.2$ & $62.2$ & $\textbf{72.0}$\scriptsize{±7.3} \\
    \midrule
    Play          & AntMaze-medium & $61.2$               & $76.6$  & $\textbf{81.3}$  & $80.2$  & -       & -      & $\textbf{84.5}$ & $71.2$ & $\textbf{88.0}$\scriptsize{±2.7}\\
    Diverse       & AntMaze-medium & $53.7$               & $78.6$  & $\textbf{82.0}$  & $79.1$  & -       & -      & $\textbf{84.8}$ & $70.0$ & $\textbf{83.2}$\scriptsize{±5.2} \\
    \midrule
    Play          & AntMaze-large  & $15.8$               & $46.4$  & $\textbf{59.3}$  & $53.2$  & -       & -      & $\textbf{63.5}$ & $39.6$ & $\textbf{55.2}$\scriptsize{±9.5} \\
    Diverse       & AntMaze-large  & $14.9$               & $\textbf{57.3}$  & $45.5$  & $52.3$  & -       & -      & $\textbf{67.9}$ & $47.5$ & $\textbf{58.0}$\scriptsize{±3.6} \\
    \midrule
    \multicolumn{2}{c}{\bf Average (AntMaze)} & $50.6$ & $69.8$ & $74.2$ & $-$ & - & - & $$79.1$$ & $63.0$ & $$\textbf{75.3}$$ \\
    \midrule
    \multicolumn{2}{c}{\bf{\# Diffusion steps}} & - & $5$ & $15$ & $-$ & $100$ & $100$ & $5$ & - & $-$ \\
\bottomrule
\end{tabular}
}

\end{table*}

\begin{table*}[!htbp]
    \caption{The full results of D-AlignIQL over $10$ random seeds. The top $3$ results in each D4RL task and the best average results are highlighted in bold. The baseline results are taken from their original papers.}
    \label{tbl:all_results}
    \vspace{0.2cm}
    \centering
    \small
    \resizebox{1.0\textwidth}{!}{%
    \begin{tabular}{llccccccccc}
    \toprule
    \multicolumn{1}{c}{\bf Dataset} & \multicolumn{1}{c}{\bf Environment} & \multicolumn{1}{c}{\bf CQL} & \multicolumn{1}{c}{\bf Diffusion-QL} & \multicolumn{1}{c}{\bf SfBC} & \multicolumn{1}{c}{\bf SQL} & \multicolumn{1}{c}{\bf DD} & \multicolumn{1}{c}{\bf Diffuser} & \multicolumn{1}{c}{\bf IDQL} & \multicolumn{1}{c}{\bf IQL} & \multicolumn{1}{c}{\bf D-AlignIQL (ours)}\\
    \midrule
    Medium-Expert & HalfCheetah    & $62.4$               &  $\textbf{96.8}$ & $92.6$  & $\textbf{94.0}$   & $90.6$     & $79.8$       &  $\textbf{95.9}$ & $86.7$ & $89.1$\scriptsize{±0.6} \\
    Medium-Expert & Hopper         & $98.7$               &  $\textbf{111.1}$    &  $108.6$ & $\textbf{111.8}$  & $\textbf{111.8}$ & $107.2$ & $108.6$ & $91.5$ & $107.1$\scriptsize{±0.2} \\
    Medium-Expert & Walker2d       & $\textbf{111.0}$              & $110.1$ & $109.8$ & $110.0$ & $108.8$ & $108.4$ & $\textbf{112.7}$ & $109.6$ & $\textbf{111.9}$\scriptsize{±0.8} \\
    \midrule
    Medium        & HalfCheetah    & $44.4$               & $\textbf{51.1}$  & $45.9$  & $48.3$  & $\textbf{49.1}$ & $44.2$ & $\textbf{51.0}$ & $47.4$ & $46.0$\scriptsize{±4.6} \\
    Medium        & Hopper         & $58.0$               & $\textbf{90.5}$  & $57.1$ & $\textbf{75.5}$  & $\textbf{79.3}$ & $58.5$ & $65.4$ & $66.3$ & $60.5$\scriptsize{±0.5} \\
    Medium        & Walker2d       & $79.2$               & $\textbf{87.0}$  & $77.9$  & $\textbf{84.2}$  & $\textbf{82.5}$ & $79.7$ & $82.5$ & $78.3$ & $79.2$\scriptsize{±2.7} \\
    \midrule
    Medium-Replay & HalfCheetah    & $\textbf{46.2}$               & $\textbf{47.8}$  & $37.1$  & $44.8$  & $39.3$ & $42.2$ & $\textbf{45.9}$ & $44.2$ & $41.1$\scriptsize{±2.8} \\
    Medium-Replay & Hopper         & $48.6$               & $\textbf{101.3}$ & $86.2$  & $\textbf{99.7}$  & $\textbf{100.0}$ & $96.8$ & $92.1$ & $94.7$ & $56.2$\scriptsize{±3.5} \\
    Medium-Replay & Walker2d       & $26.7$               & $\textbf{95.5}$ & $65.1$  & $\textbf{81.2}$  & $75.0$ & $61.2$ & $\textbf{85.1}$ & $73.9$ & $58.7$\scriptsize{±0.6} \\
    \midrule
    \multicolumn{2}{c}{\bf Average (Locomotion)} & $63.9$ & $\textbf{87.9}$ & $75.6$ & $$83.3$$ & $81.8$ & $75.3$ & $82.1$ & $76.9$ & $72.2$ \\
    \midrule
    Default       & AntMaze-umaze  & $74.0$               & $\textbf{93.4}$ & $92.0$  & $92.2$  & -       & -      & $\textbf{94.0}$ & $87.5$ & $\textbf{94.8}$\scriptsize{±3.2} \\
    Diverse       & AntMaze-umaze  & $\textbf{84.0}$          & $66.2$  & $\textbf{85.3}$  & $74.0$ & -       & -      & $80.2$ & $62.2$ & $\textbf{82.4}$\scriptsize{±4.4} \\
    \midrule
    Play          & AntMaze-medium & $61.2$               & $76.6$  & $\textbf{81.3}$  & $80.2$  & -       & -      & $\textbf{84.5}$ & $71.2$ & $\textbf{87.5}$\scriptsize{±2.5} \\
    Diverse       & AntMaze-medium & $53.7$               & $78.6$  & $\textbf{82.0}$  & $79.1$  & -       & -      & $\textbf{84.8}$ & $70.0$ & $\textbf{85.0}$\scriptsize{±5.0} \\
    \midrule
    Play          & AntMaze-large  & $15.8$               & $46.4$  & $\textbf{59.3}$  & $53.2$  & -       & -      & $\textbf{63.5}$ & $39.6$ & $\textbf{65.2}$\scriptsize{±9.6} \\
    Diverse       & AntMaze-large  & $14.9$               & $\textbf{57.3}$  & $45.5$  & $52.3$  & -       & -      & $\textbf{67.9}$ & $47.5$ & $\textbf{66.4}$\scriptsize{±9.7} \\
    \midrule
    \multicolumn{2}{c}{\bf Average (AntMaze)} & $50.6$ & $69.8$ & $74.2$ & $-$ & - & - & $$79.1$$ & $63.0$ & $$\textbf{80.2}$$ \\
    \midrule
    \multicolumn{2}{c}{\bf{\# Diffusion steps}} & - & $5$ & $15$ & $-$ & $100$ & $100$ & $5$ & - & $5$ \\
    \bottomrule
    \end{tabular}
    }

    \end{table*}

\begin{table}[ht]
\centering
\caption{Performance of Gaussian-based AlignIQL under different $\kappa$ on AntMaze tasks. AlignIQL($\kappa>0$) outperforms IQL+AWR on all AntMaze tasks, and the difference between $\kappa=0.01$ and $\kappa=0.1$ is small (around $5\%$).}
\resizebox{1.0\textwidth}{!}{%
\begin{tabular}{lcccccc|c}
\toprule
Method & Umaze & Umaze-Diverse & Medium-Play & Medium-Diverse & Large-Play & Large-Diverse & Average \\
\midrule
IQL & 87.5 & 62.2 & 71.2 & 70.0 & 39.6 & 47.5 & 63.0 \\
AlignIQL ($\kappa=0.0$)     & 87.2 ± 4.8 & 68.8 ± 7.7 & 20.4 ± 7.9 & 34.8 ± 14.1 & 4.8±4.6 & 4.0±1.8 & 36.7 \\
AlignIQL ($\kappa=0.01$) & 90.8 ± 1.1 & 73.2 ± 5.8 & 74.0 ± 10.9 & 82.0± 3.6 & 34.4 ± 14.0 & 26.0 ± 3.7 & 63.4\\
AlignIQL ($\kappa=0.1$) & 95.6 ± 2.2 & 66.0 ± 10.5 & 74.8 ± 9.4 & 81.2 ± 4.6 & 53.6 ± 6.7 & 48.8 ± 8.1 & 70.0 \\
\bottomrule
\end{tabular} }
\label{table:ant_policy_align}
\end{table}

\paragraph{Extra Ablation Study} 
\label{extra_ablation}
In this section, we put the full ablation study results of $\eta$  or $\kappa$ for Gaussian-based AlignIQL in D4RL tasks.  Tables~\ref {table:full_ablation2} and~\ref {table:ant_policy_align} report the corresponding final evaluation results over $10$ random seeds. 

\begin{table}[htbp]
    \centering
        \caption{Performance  of AlignIQL under different $\eta$ over $10$ random seeds. }
\begin{tabular}{c|c|c|c|c|c|c|c|c|c|c}
\hline
\multirow{2}{*}{$\eta$}&
\multicolumn{3}{c|}{Walker2d} &
\multicolumn{3}{c|}{Halfcheetah} &
\multicolumn{3}{c|}{Hopper}&\multirow{2}{*}{Average}\\
\cline{2-10} 
 & M&ME &MR &M&ME &MR &M&ME &MR \\
\hline
$\eta=0.5$      & $74.4$  &  $87.2$   &  $28.3$     & $43.1$   & $79.1$ &   $38.8$  & $55.2$ & $35.5$ & $76.0$&  $57.5$\\
\hline
$\eta=1$    & $72.8$    &  $84.3$   &  $51.7$  & $43.5$       & $64.3$ &   $36.0$ & $55.3$ & $41.1$ & $72.8$&  $58.0$\\
\hline
$\eta=3$   & $75.3$     &  $88.4$   &  $62.9$   & $43.8$      & $68.7$ &   $38.1$  
 & $56.2$ & $52.2$ & $79.3$&  $62.8$\\
\hline
$\eta=5$ & $76.2$ & $86.2$ &    $63.9$ & $43.9$   &  $67.4$ &  $39.6$ & $57.1$ & $94.9$ & $80.1$&  $67.7$\\
\hline 
$\eta=10$ & $76.7$ & $104.4$ &    $66.3$ & $44.2$  &  $73.3$ &  $37.3$ & $57.8$ & $75.2$ & $77.9$&  $68.1$\\ 
\hline
\end{tabular}
    \label{table:full_ablation2}
\end{table}

\clearpage
\newpage
\section{Hyperparameters}
 We provide the main hyperparameters in Table~\ref{tab:hy1} to reproduce our results in Table~\ref{tbl:rl_results} and Table~\ref{tbl:all_results}. Here are the hyperparameters for reproducing our results.

\begin{center}
\label{tab:hy1}
\def\arraystretch{1.0}
\begin{tabular}{|l|c|} 
\hline
\textbf{LR} (For all networks except for multiplier)  & 3e-4 \\
\textbf{LR} (Multiplier Network)  & 3e-5 \\
\textbf{Critic Batch Size}  & 512 \\
\textbf{Actor Batch Size}  & 512 \\
\textbf{$\tau$ Expectiles}  & $0.7$ (locomotion), $0.9$ (AntMaze)\\
\multirow{2}{*}{\textbf{$\eta$ For AlignIQL and D-AlignIQL}} & $10$ (MuJoCo)\\ &$1$ (D-AlignIQL)\\
\textbf{$\kappa$ for AlignIQL on AntMaze Tasks}  & $\kappa=0.1$ \\
\textbf{$\eta$ For AlignIQL on AntMaze Tasks}  & $3$ (Umaze),$10$ (Umaze-d),$300$ (M-P,M-d,L-d,L-p)  \\
\multirow{2}{*}{\textbf{Grad norm for multiplier on MuJoCo in AlignIQL-hard}}  & $1.0$ ($\alpha$)\\& $0.5$ ($\beta$)\\
\multirow{2}{*}{\textbf{Grad norm for multiplier on AntMaze in AlignIQL-hard}}  & $1.0$ ($\alpha$)\\& $1$ ($\beta$)\\
\textbf{Critic Grad Steps} & 3e6 \\
\textbf{Actor Grad Steps}  & 3e6 \\
\textbf{Target Critic EMA} & 0.005 \\
\textbf{T} & 5 \\
% \textbf{N} & 64 (AntMaze), 256 (MuJoCO) \\
\textbf{Beta schedule} & \text{Variance Preserving~\citep{song2020score} } \\
\textbf{Actor Dropout Rate} & 0.1 for actor on all tasks \\
\textbf{Critic Dropout Rate} & 0.1 for AntMaze Tasks in AlignIQL-hard \\
\textbf{Number Residual Blocks} & 3 \\
\textbf{Actor Cosine Decay~\citep{loshchilov2016sgdr}}  & Number of Actor Grad Steps\\
\textbf{Optimizer} & Adam \citep{kingma2014adam}\\
\multirow{1}{*}{\textbf{$N$ For D-AlignIQL}} & $256$ (MuJoCo,AntMaze)\\
\hline
\end{tabular}
\end{center}
\begin{figure*}[htbp]
\centering
% \vspace{-0.1in}
\includegraphics[width = \linewidth]{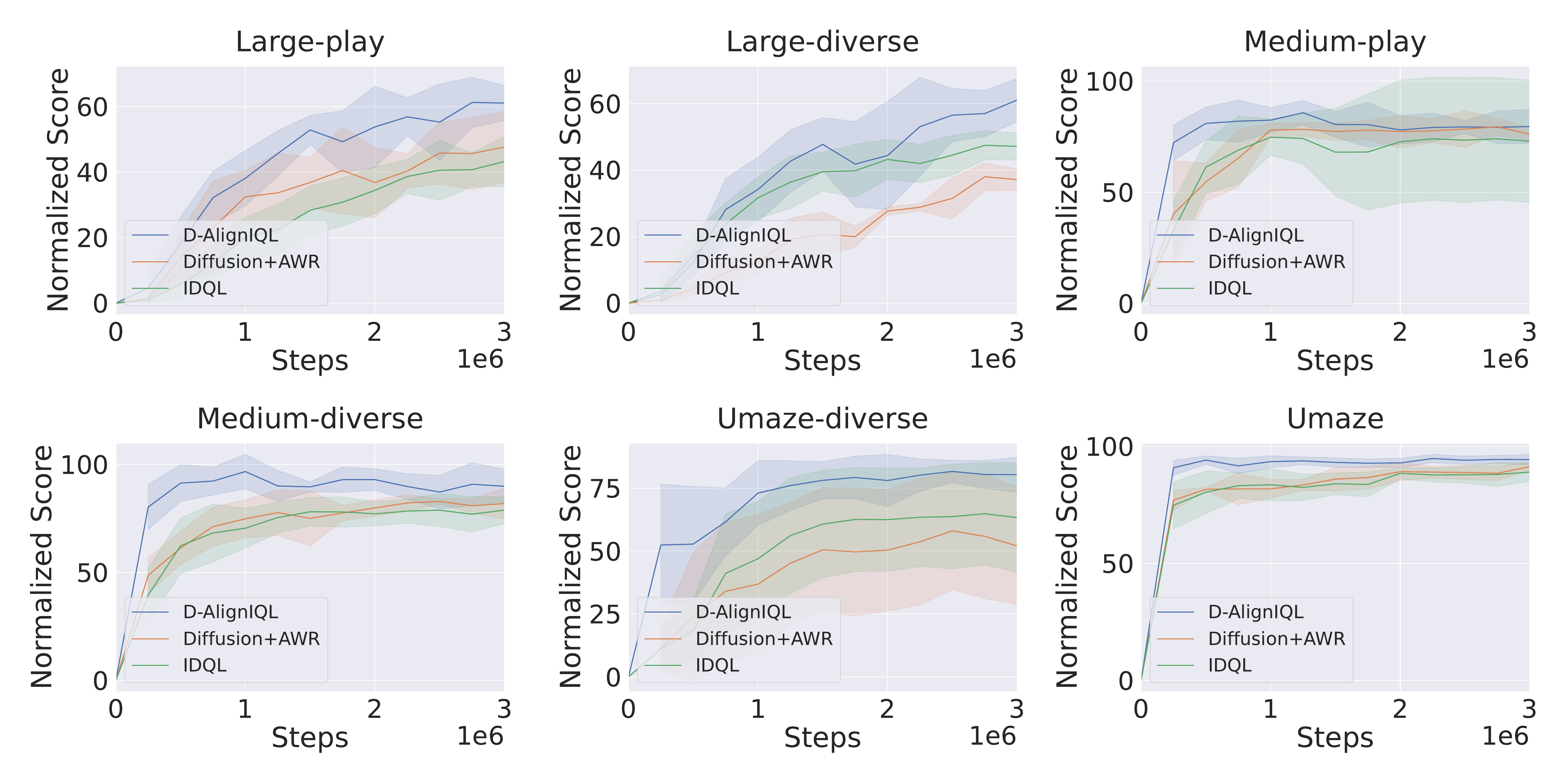}
% \vspace{-0.2in}
\caption{Training curves of D-AlignIQL, IDQL, and Diffusion+AWR. The normalized score is calculated by averaging the scores across three different $N$ ($N=16,64,256$).}
\label{fig:training curve}
\end{figure*}

\clearpage
\newpage
\section*{NeurIPS Paper Checklist}

\begin{enumerate}

\item {\bf Claims}
    \item[] Question: Do the main claims made in the abstract and introduction accurately reflect the paper's contributions and scope?
    \item[] Answer: \answerYes{} % Replace by \answerYes{}, \answerNo{}, or \answerNA{}.
    \item[] Justification: Section Introduction
    \item[] Guidelines:
    \begin{itemize}
        \item The answer NA means that the abstract and introduction do not include the claims made in the paper.
        \item The abstract and/or introduction should clearly state the claims made, including the contributions made in the paper and important assumptions and limitations. A No or NA answer to this question will not be perceived well by the reviewers. 
        \item The claims made should match theoretical and experimental results, and reflect how much the results can be expected to generalize to other settings. 
        \item It is fine to include aspirational goals as motivation as long as it is clear that these goals are not attained by the paper. 
    \end{itemize}

\item {\bf Limitations}
    \item[] Question: Does the paper discuss the limitations of the work performed by the authors?
    \item[] Answer: \answerYes{} % Replace by \answerYes{}, \answerNo{}, or \answerNA{}.
    \item[] Justification: Section Conclusion
    \item[] Guidelines:
    \begin{itemize}
        \item The answer NA means that the paper has no limitation while the answer No means that the paper has limitations, but those are not discussed in the paper. 
        \item The authors are encouraged to create a separate "Limitations" section in their paper.
        \item The paper should point out any strong assumptions and how robust the results are to violations of these assumptions (e.g., independence assumptions, noiseless settings, model well-specification, asymptotic approximations only holding locally). The authors should reflect on how these assumptions might be violated in practice and what the implications would be.
        \item The authors should reflect on the scope of the claims made, e.g., if the approach was only tested on a few datasets or with a few runs. In general, empirical results often depend on implicit assumptions, which should be articulated.
        \item The authors should reflect on the factors that influence the performance of the approach. For example, a facial recognition algorithm may perform poorly when image resolution is low or images are taken in low lighting. Or a speech-to-text system might not be used reliably to provide closed captions for online lectures because it fails to handle technical jargon.
        \item The authors should discuss the computational efficiency of the proposed algorithms and how they scale with dataset size.
        \item If applicable, the authors should discuss possible limitations of their approach to address problems of privacy and fairness.
        \item While the authors might fear that complete honesty about limitations might be used by reviewers as grounds for rejection, a worse outcome might be that reviewers discover limitations that aren't acknowledged in the paper. The authors should use their best judgment and recognize that individual actions in favor of transparency play an important role in developing norms that preserve the integrity of the community. Reviewers will be specifically instructed to not penalize honesty concerning limitations.
    \end{itemize}

\item {\bf Theory assumptions and proofs}
    \item[] Question: For each theoretical result, does the paper provide the full set of assumptions and a complete (and correct) proof?
    \item[] Answer: \answerYes{} % Replace by \answerYes{}, \answerNo{}, or \answerNA{}.
    \item[] Justification: Section Appendix B
    \item[] Guidelines:
    \begin{itemize}
        \item The answer NA means that the paper does not include theoretical results. 
        \item All the theorems, formulas, and proofs in the paper should be numbered and cross-referenced.
        \item All assumptions should be clearly stated or referenced in the statement of any theorems.
        \item The proofs can either appear in the main paper or the supplemental material, but if they appear in the supplemental material, the authors are encouraged to provide a short proof sketch to provide intuition. 
        \item Inversely, any informal proof provided in the core of the paper should be complemented by formal proofs provided in appendix or supplemental material.
        \item Theorems and Lemmas that the proof relies upon should be properly referenced. 
    \end{itemize}

    \item {\bf Experimental result reproducibility}
    \item[] Question: Does the paper fully disclose all the information needed to reproduce the main experimental results of the paper to the extent that it affects the main claims and/or conclusions of the paper (regardless of whether the code and data are provided or not)?
    \item[] Answer: \answerYes{} % Replace by \answerYes{}, \answerNo{}, or \answerNA{}.
    \item[] Justification: Section Appendix D
    \item[] Guidelines:
    \begin{itemize}
        \item The answer NA means that the paper does not include experiments.
        \item If the paper includes experiments, a No answer to this question will not be perceived well by the reviewers: Making the paper reproducible is important, regardless of whether the code and data are provided or not.
        \item If the contribution is a dataset and/or model, the authors should describe the steps taken to make their results reproducible or verifiable. 
        \item Depending on the contribution, reproducibility can be accomplished in various ways. For example, if the contribution is a novel architecture, describing the architecture fully might suffice, or if the contribution is a specific model and empirical evaluation, it may be necessary to either make it possible for others to replicate the model with the same dataset, or provide access to the model. In general. releasing code and data is often one good way to accomplish this, but reproducibility can also be provided via detailed instructions for how to replicate the results, access to a hosted model (e.g., in the case of a large language model), releasing of a model checkpoint, or other means that are appropriate to the research performed.
        \item While NeurIPS does not require releasing code, the conference does require all submissions to provide some reasonable avenue for reproducibility, which may depend on the nature of the contribution. For example
        \begin{enumerate}
            \item If the contribution is primarily a new algorithm, the paper should make it clear how to reproduce that algorithm.
            \item If the contribution is primarily a new model architecture, the paper should describe the architecture clearly and fully.
            \item If the contribution is a new model (e.g., a large language model), then there should either be a way to access this model for reproducing the results or a way to reproduce the model (e.g., with an open-source dataset or instructions for how to construct the dataset).
            \item We recognize that reproducibility may be tricky in some cases, in which case authors are welcome to describe the particular way they provide for reproducibility. In the case of closed-source models, it may be that access to the model is limited in some way (e.g., to registered users), but it should be possible for other researchers to have some path to reproducing or verifying the results.
        \end{enumerate}
    \end{itemize}

\item {\bf Open access to data and code}
    \item[] Question: Does the paper provide open access to the data and code, with sufficient instructions to faithfully reproduce the main experimental results, as described in supplemental material?
    \item[] Answer: \answerYes{} % Replace by \answerYes{}, \answerNo{}, or \answerNA{}.
    \item[] Justification: See Attachments
    \item[] Guidelines:
    \begin{itemize}
        \item The answer NA means that paper does not include experiments requiring code.
        \item Please see the NeurIPS code and data submission guidelines (\url{https://nips.cc/public/guides/CodeSubmissionPolicy}) for more details.
        \item While we encourage the release of code and data, we understand that this might not be possible, so “No” is an acceptable answer. Papers cannot be rejected simply for not including code, unless this is central to the contribution (e.g., for a new open-source benchmark).
        \item The instructions should contain the exact command and environment needed to run to reproduce the results. See the NeurIPS code and data submission guidelines (\url{https://nips.cc/public/guides/CodeSubmissionPolicy}) for more details.
        \item The authors should provide instructions on data access and preparation, including how to access the raw data, preprocessed data, intermediate data, and generated data, etc.
        \item The authors should provide scripts to reproduce all experimental results for the new proposed method and baselines. If only a subset of experiments are reproducible, they should state which ones are omitted from the script and why.
        \item At submission time, to preserve anonymity, the authors should release anonymized versions (if applicable).
        \item Providing as much information as possible in supplemental material (appended to the paper) is recommended, but including URLs to data and code is permitted.
    \end{itemize}

\item {\bf Experimental setting/details}
    \item[] Question: Does the paper specify all the training and test details (e.g., data splits, hyperparameters, how they were chosen, type of optimizer, etc.) necessary to understand the results?
    \item[] Answer: \answerYes{} % Replace by \answerYes{}, \answerNo{}, or \answerNA{}.
    \item[] Justification: Section Appendix D
    \item[] Guidelines:
    \begin{itemize}
        \item The answer NA means that the paper does not include experiments.
        \item The experimental setting should be presented in the core of the paper to a level of detail that is necessary to appreciate the results and make sense of them.
        \item The full details can be provided either with the code, in appendix, or as supplemental material.
    \end{itemize}

\item {\bf Experiment statistical significance}
    \item[] Question: Does the paper report error bars suitably and correctly defined or other appropriate information about the statistical significance of the experiments?
    \item[] Answer: \answerYes{} % Replace by \answerYes{}, \answerNo{}, or \answerNA{}.
    \item[] Justification: Section Experiments
    \item[] Guidelines:
    \begin{itemize}
        \item The answer NA means that the paper does not include experiments.
        \item The authors should answer "Yes" if the results are accompanied by error bars, confidence intervals, or statistical significance tests, at least for the experiments that support the main claims of the paper.
        \item The factors of variability that the error bars are capturing should be clearly stated (for example, train/test split, initialization, random drawing of some parameter, or overall run with given experimental conditions).
        \item The method for calculating the error bars should be explained (closed form formula, call to a library function, bootstrap, etc.)
        \item The assumptions made should be given (e.g., Normally distributed errors).
        \item It should be clear whether the error bar is the standard deviation or the standard error of the mean.
        \item It is OK to report 1-sigma error bars, but one should state it. The authors should preferably report a 2-sigma error bar than state that they have a 96\% CI, if the hypothesis of Normality of errors is not verified.
        \item For asymmetric distributions, the authors should be careful not to show in tables or figures symmetric error bars that would yield results that are out of range (e.g. negative error rates).
        \item If error bars are reported in tables or plots, The authors should explain in the text how they were calculated and reference the corresponding figures or tables in the text.
    \end{itemize}

\item {\bf Experiments compute resources}
    \item[] Question: For each experiment, does the paper provide sufficient information on the computer resources (type of compute workers, memory, time of execution) needed to reproduce the experiments?
    \item[] Answer: \answerYes{} % Replace by \answerYes{}, \answerNo{}, or \answerNA{}.
    \item[] Justification: Section Experiments
    \item[] Guidelines:
    \begin{itemize}
        \item The answer NA means that the paper does not include experiments.
        \item The paper should indicate the type of compute workers CPU or GPU, internal cluster, or cloud provider, including relevant memory and storage.
        \item The paper should provide the amount of compute required for each of the individual experimental runs as well as estimate the total compute. 
        \item The paper should disclose whether the full research project required more compute than the experiments reported in the paper (e.g., preliminary or failed experiments that didn't make it into the paper). 
    \end{itemize}
    
\item {\bf Code of ethics}
    \item[] Question: Does the research conducted in the paper conform, in every respect, with the NeurIPS Code of Ethics \url{https://neurips.cc/public/EthicsGuidelines}?
    \item[] Answer: \answerYes{} % Replace by \answerYes{}, \answerNo{}, or \answerNA{}.
    \item[] Justification: Our paper does not involve these issues.
    \item[] Guidelines:
    \begin{itemize}
        \item The answer NA means that the authors have not reviewed the NeurIPS Code of Ethics.
        \item If the authors answer No, they should explain the special circumstances that require a deviation from the Code of Ethics.
        \item The authors should make sure to preserve anonymity (e.g., if there is a special consideration due to laws or regulations in their jurisdiction).
    \end{itemize}

\item {\bf Broader impacts}
    \item[] Question: Does the paper discuss both potential positive societal impacts and negative societal impacts of the work performed?
    \item[] Answer: \answerYes{} % Replace by \answerYes{}, \answerNo{}, or \answerNA{}.
    \item[] Justification: Appendix 
    \item[] Guidelines:
    \begin{itemize}
        \item The answer NA means that there is no societal impact of the work performed.
        \item If the authors answer NA or No, they should explain why their work has no societal impact or why the paper does not address societal impact.
        \item Examples of negative societal impacts include potential malicious or unintended uses (e.g., disinformation, generating fake profiles, surveillance), fairness considerations (e.g., deployment of technologies that could make decisions that unfairly impact specific groups), privacy considerations, and security considerations.
        \item The conference expects that many papers will be foundational research and not tied to particular applications, let alone deployments. However, if there is a direct path to any negative applications, the authors should point it out. For example, it is legitimate to point out that an improvement in the quality of generative models could be used to generate deepfakes for disinformation. On the other hand, it is not needed to point out that a generic algorithm for optimizing neural networks could enable people to train models that generate Deepfakes faster.
        \item The authors should consider possible harms that could arise when the technology is being used as intended and functioning correctly, harms that could arise when the technology is being used as intended but gives incorrect results, and harms following from (intentional or unintentional) misuse of the technology.
        \item If there are negative societal impacts, the authors could also discuss possible mitigation strategies (e.g., gated release of models, providing defenses in addition to attacks, mechanisms for monitoring misuse, mechanisms to monitor how a system learns from feedback over time, improving the efficiency and accessibility of ML).
    \end{itemize}
    
\item {\bf Safeguards}
    \item[] Question: Does the paper describe safeguards that have been put in place for responsible release of data or models that have a high risk for misuse (e.g., pretrained language models, image generators, or scraped datasets)?
    \item[] Answer: \answerNo{} % Replace by \answerYes{}, \answerNo{}, or \answerNA{}.
    \item[] Justification: Our paper does not involve these issues.
    \item[] Guidelines:
    \begin{itemize}
        \item The answer NA means that the paper poses no such risks.
        \item Released models that have a high risk for misuse or dual-use should be released with necessary safeguards to allow for controlled use of the model, for example by requiring that users adhere to usage guidelines or restrictions to access the model or implementing safety filters. 
        \item Datasets that have been scraped from the Internet could pose safety risks. The authors should describe how they avoided releasing unsafe images.
        \item We recognize that providing effective safeguards is challenging, and many papers do not require this, but we encourage authors to take this into account and make a best faith effort.
    \end{itemize}

\item {\bf Licenses for existing assets}
    \item[] Question: Are the creators or original owners of assets (e.g., code, data, models), used in the paper, properly credited and are the license and terms of use explicitly mentioned and properly respected?
    \item[] Answer: \answerYes{} % Replace by \answerYes{}, \answerNo{}, or \answerNA{}.
    \item[] Justification: Section Experiments
    \item[] Guidelines:
    \begin{itemize}
        \item The answer NA means that the paper does not use existing assets.
        \item The authors should cite the original paper that produced the code package or dataset.
        \item The authors should state which version of the asset is used and, if possible, include a URL.
        \item The name of the license (e.g., CC-BY 4.0) should be included for each asset.
        \item For scraped data from a particular source (e.g., website), the copyright and terms of service of that source should be provided.
        \item If assets are released, the license, copyright information, and terms of use in the package should be provided. For popular datasets, \url{paperswithcode.com/datasets} has curated licenses for some datasets. Their licensing guide can help determine the license of a dataset.
        \item For existing datasets that are re-packaged, both the original license and the license of the derived asset (if it has changed) should be provided.
        \item If this information is not available online, the authors are encouraged to reach out to the asset's creators.
    \end{itemize}

\item {\bf New assets}
    \item[] Question: Are new assets introduced in the paper well documented and is the documentation provided alongside the assets?
    \item[] Answer: \answerYes{} % Replace by \answerYes{}, \answerNo{}, or \answerNA{}.
    \item[] Justification: See Attachments
    \item[] Guidelines:
    \begin{itemize}
        \item The answer NA means that the paper does not release new assets.
        \item Researchers should communicate the details of the dataset/code/model as part of their submissions via structured templates. This includes details about training, license, limitations, etc. 
        \item The paper should discuss whether and how consent was obtained from people whose asset is used.
        \item At submission time, remember to anonymize your assets (if applicable). You can either create an anonymized URL or include an anonymized zip file.
    \end{itemize}

\item {\bf Crowdsourcing and research with human subjects}
    \item[] Question: For crowdsourcing experiments and research with human subjects, does the paper include the full text of instructions given to participants and screenshots, if applicable, as well as details about compensation (if any)? 
    \item[] Answer: \answerNo{} % Replace by \answerYes{}, \answerNo{}, or \answerNA{}.
    \item[] Justification: Our paper does not involve these issues.
    \item[] Guidelines:
    \begin{itemize}
        \item The answer NA means that the paper does not involve crowdsourcing nor research with human subjects.
        \item Including this information in the supplemental material is fine, but if the main contribution of the paper involves human subjects, then as much detail as possible should be included in the main paper. 
        \item According to the NeurIPS Code of Ethics, workers involved in data collection, curation, or other labor should be paid at least the minimum wage in the country of the data collector. 
    \end{itemize}

\item {\bf Institutional review board (IRB) approvals or equivalent for research with human subjects}
    \item[] Question: Does the paper describe potential risks incurred by study participants, whether such risks were disclosed to the subjects, and whether Institutional Review Board (IRB) approvals (or an equivalent approval/review based on the requirements of your country or institution) were obtained?
    \item[] Answer: \answerNo{} % Replace by \answerYes{}, \answerNo{}, or \answerNA{}.
    \item[] Justification: Our paper does not involve these issues.
    \item[] Guidelines:
    \begin{itemize}
        \item The answer NA means that the paper does not involve crowdsourcing nor research with human subjects.
        \item Depending on the country in which research is conducted, IRB approval (or equivalent) may be required for any human subjects research. If you obtained IRB approval, you should clearly state this in the paper. 
        \item We recognize that the procedures for this may vary significantly between institutions and locations, and we expect authors to adhere to the NeurIPS Code of Ethics and the guidelines for their institution. 
        \item For initial submissions, do not include any information that would break anonymity (if applicable), such as the institution conducting the review.
    \end{itemize}

\item {\bf Declaration of LLM usage}
    \item[] Question: Does the paper describe the usage of LLMs if it is an important, original, or non-standard component of the core methods in this research? Note that if the LLM is used only for writing, editing, or formatting purposes and does not impact the core methodology, scientific rigorousness, or originality of the research, declaration is not required.
    %this research? 
    \item[] Answer: \answerNA{}% Replace by \answerYes{}, \answerNo{}, or \answerNA{}.
    \item[] Justification: The core method development in this research does not involve LLMs as any important, original, or non-standard components.
    \item[] Guidelines:
    \begin{itemize}
        \item The answer NA means that the core method development in this research does not involve LLMs as any important, original, or non-standard components.
        \item Please refer to our LLM policy (\url{https://neurips.cc/Conferences/2025/LLM}) for what should or should not be described.
    \end{itemize}

\end{enumerate}

\end{document}